\definecolor{codegreen}{rgb}{0,0.6,0}
\definecolor{codegray}{rgb}{0.5,0.5,0.5}
\definecolor{codepurple}{rgb}{0.58,0,0.82}
\definecolor{backcolour}{rgb}{0.95,0.95,0.92}
\lstdefinestyle{mystyle}{
    backgroundcolor=\color{backcolour},   
    commentstyle=\color{codegreen},
    keywordstyle=\color{magenta},
    numberstyle=\tiny\color{codegray},
    stringstyle=\color{codepurple},
    basicstyle=\ttfamily\footnotesize,
    breakatwhitespace=false,         
    breaklines=true,                 
    captionpos=b,                    
    keepspaces=true,                 
    numbers=left,                    
    numbersep=5pt,                  
    showspaces=false,                
    showstringspaces=false,
    showtabs=false,                  
    tabsize=2
}
\def\rset{\mathbb{R}}
\def\diag{\operatorname{diag}}
\def\P{\mathbb{P}}
\newcommand{\eqsp}{\;}
\def\trace{\operatorname{tr}}
\def\rank{\operatorname{rk}}
\DeclareMathOperator*{\argmax}{arg\,max}
\crefname{section}{Section}{Sections}
\Crefname{section}{Section}{Sections}
\newtheorem{theorem}{Theorem}
\crefname{theorem}{theorem}{Theorems}
\Crefname{theorem}{Theorem}{Theorems}
\newtheorem{definition}{Definition}
\crefname{definition}{definition}{definitions}
\Crefname{Definition}{Definition}{Definitions}
\newtheorem{lemma}[theorem]{Lemma}
\crefname{lemma}{lemma}{lemmas}
\Crefname{lemma}{Lemma}{Lemmas}
\newtheorem{remark}[theorem]{Remark}
\crefname{remark}{remark}{remarks}
\Crefname{remark}{Remark}{Remarks}
\crefname{corollary}{corollary}{corollaries}
\Crefname{corollary}{Corollary}{Corollaries}
\crefname{proposition}{proposition}{propositions}
\Crefname{proposition}{Proposition}{Propositions}
\newtheorem{example}{Example}
\crefname{example}{example}{examples}
\Crefname{Example}{Example}{Examples}
\crefname{figure}{figure}{figures}
\Crefname{Figure}{Figure}{Figures}
\crefname{table}{table}{tables}
\Crefname{Table}{Table}{Tables}
\crefname{equation}{equation}{equations}
\Crefname{Equation}{Equation}{Equations}
\newenvironment{explanation}[1][Explanation]{%
  \begin{trivlist}
  \item[\hskip\labelsep\itshape #1.]\normalfont
}{%
  \hfill\qedsymbol\end{trivlist}
}
\crefname{assum}{A\hspace{-2pt}}{A\hspace{-2pt}}
\crefname{assumb}{B\hspace{-2pt}}{B\hspace{-2pt}}
\crefname{assumUGE}{UGE\hspace{-1pt}}{UGE\hspace{-1pt}}
\crefname{assumID}{IND\hspace{-1pt}}{IND\hspace{-1pt}}
\crefname{assumUE}{UE\hspace{-1pt}}{UE\hspace{-1pt}}
\crefname{assumSUP}{M\hspace{-1pt}}{M\hspace{-1pt}}
\newlist{renumerate}{enumerate}{3}
\setlist[renumerate]{wide, labelwidth=!, labelindent=0pt,label=(\roman*)}
\newlist{aenumerate}{enumerate}{3}
\setlist[aenumerate]{wide, labelwidth=!, labelindent=0pt,label=(\arabic*)}
\newlist{aaenumerate}{enumerate}{3}
\setlist[aaenumerate]{wide, labelwidth=!, labelindent=0pt,label=(\alph*)}
\newlist{aenumerateSpace}{enumerate}{3}
\setlist[aenumerateSpace]{wide, labelwidth=!,label=(\arabic*)}
\newlist{benumerate}{enumerate}{3}
\setlist[benumerate]{wide, labelwidth=!, labelindent=0pt,label=$\bullet$}
\title{Matrix-Free Two-to-Infinity and One-to-Two Norms Estimation}
\author {
    Askar Tsyganov\textsuperscript{\rm 1},
    Evgeny Frolov\textsuperscript{\rm 2,\rm 1},
    Sergey Samsonov\equalcontrib\textsuperscript{\rm 1},
    Maxim Rakhuba\equalcontrib\textsuperscript{\rm 1}
}
\begin{document}

\maketitle

\begin{abstract}
In this paper, we propose new randomized algorithms for estimating the two-to-infinity and one-to-two norms in a matrix-free setting, using only matrix-vector multiplications. Our methods are based on appropriate modifications of Hutchinson's diagonal estimator and its Hutch++ version. We provide oracle complexity bounds for both modifications. We further illustrate the practical utility of our algorithms for Jacobian-based regularization in deep neural network training on image classification tasks. We also demonstrate that our methodology can be applied to mitigate the effect of adversarial attacks in the domain of recommender systems.
\end{abstract}

\begin{links}
    \link{Code}{https://github.com/fallnlove/TwoToInfinity}
\end{links}

\section{Introduction}
\label{sec:intro}

In recent years, there has been growing interest in randomized linear algebra techniques \cite{halko2011finding,martinsson2020randomized} for estimating matrix functions without explicit access to the matrix entries, see e.g. \cite{hutchinson1989stochastic,bekas2007estimator}. This setting, known as \emph{matrix-free}, assumes access to an oracle that computes matrix-vector products with a matrix $A$, and in some cases also with its transpose $A^\top$. The goal is to approximate important characteristics of $A$, for example, norms, trace or spectral density, using only these products. Such a framework is essential in modern machine learning, where matrices such as the Jacobian of deep neural networks are prohibitively large to form explicitly but allow efficient computation of matrix-vector products via automatic differentiation (autograd), see e.g. \cite{baydin2018automatic}.
\par
In this paper, we focus on matrix-free estimation of two operator norms: $\|\cdot\|_{2\to\infty}$ and $\|\cdot\|_{1\to2}$. Formally, for a matrix $A \in \mathbb{R}^{d\times n}$, its two-to-infinity norm can be defined as
\begin{equation*}
\|A\|_{2\to\infty} = \sup_{x \neq 0} \frac{\|Ax\|_{\infty}}{\|x\|_{2}} = \max_{i\in[d]} \|A_{i}\|_2\eqsp,
\end{equation*}
where $A_i$ denotes the $i$-th row of $A$. Given the identity
\begin{equation*}
\|A\|_{2\to\infty} = \|A^\top\|_{1\to2}\eqsp,
\end{equation*}
it suffices to concentrate on the estimation of the two-to-infinity norm. Compared to classical matrix norms such as the spectral or Frobenius norm, the two-to-infinity norm provides finer control over the row-wise structure of a matrix. Indeed, when dealing with \emph{tall} matrices, that is, matrices $A \in \rset^{d \times n}$ with $d \gg n$, it is natural that $\|A\|_{2\to\infty}$ does not scale with $d$, contrary to spectral and Frobenius norm of this matrix. In such cases, bounding the two-to-infinity norm ensures that the norm of each row is tightly controlled. This localized control is especially useful when studying theoretical properties of various algorithms \cite{cape2019two,pensky2024davis}, see details in the related work section. In this paper, we focus on methodological and statistical aspects of estimating the two-to-infinity norm, rather than on its theoretical utility. Our main contributions are as follows:

\begin{itemize}
    \item We introduce a novel randomized algorithm tailored specifically for the $\|\cdot\|_{1 \to 2}$ and $\|\cdot\|_{2 \to \infty}$ norms estimation under the matrix-free setting. Our method enjoys provable convergence guarantees and empirically demonstrates reliable performance. 
    \item We apply suggested randomized estimators as regularizers in image classification and recommender systems problems. For image classification with deep neural networks, our method achieves better generalization performance compared to prior Jacobian regularization techniques \cite{hoffman2019robust,roth2020adversarial}. In the recommender systems domain, we consider UltraGCN-type architectures \cite{mao2021ultragcn} and show that employing our regularizer increases the robustness of the algorithm to adversarial attacks, following the pipeline of \cite{he2018adversarial}.
\end{itemize}

The rest of the paper is structured as follows. We discuss related work in \Cref{sec:related-work}. Then, in \Cref{sec:main_algo}, we present our main algorithm, TwINEst (see \Cref{algo:twinest}), and analyze its sample complexity. In \Cref{sec:improved_algo}, we provide a variance‐reduced version of the TwINEst algorithm and study the theoretical properties of the modified algorithm. Finally, we present numerical results in \Cref{sec:experiments}. Proofs of the theoretical results and additional numerical experiments are provided in the supplementary material.

\paragraph{Notations.} For a vector $x \in \rset^d$, $\|x\|_p = (\sum_{i=1}^d |x_i|^p)^{1/p}$ denotes the $\ell_p$-norm ($p\geq1$), $\|x\|_{\infty} = \max_{i} |x_i|$ denotes the $\ell_\infty$-norm.
For a matrix $A \in \rset^{d \times n}$, $A_i$ denotes the $i$-th row of $A$, and $\|A\|_F = (\sum_{i=1}^d\sum_{j=1}^n A_{ij}^2)^{1/2}$ denotes its Frobenius norm. We define the induced norms as
$\|A\|_{p \to q} := \sup_{x \neq 0} \|Ax\|_q/\|x\|_p$.
For example, $\|A\|_{2 \to \infty}$ is equal to the maximum $\ell_2$ norm of the rows and $\|A\|_{1 \to 2}$ is equal to the maximum $\ell_2$ norm of the columns.
We define the max-norm as $\| A \|_{\text{max}} := \min_{U, V :\, A = UV^\top} \| U \|_{2 \to \infty} \| V \|_{2 \to \infty}$.
For matrices $A, B \in \mathbb{R}^{d \times n}$, $A \odot B$ denotes the Hadamard product (element-wise). For $d \in \mathbb{N}$, we denote $[d] = \{1,\ldots,d\}$. A random variable $\xi$ is called a Rademacher random variable if $\P(\xi = 1) = \P(\xi = -1) = \frac{1}{2}$. The vector $e_i := (0,\dots,0,1,0,\dots,0)^{\top}$, with $1$ in position $i$ and $0$ elsewhere, denotes the $i$‑th standard basis vector in $\mathbb{R}^n$.

\section{Related Work}
\label{sec:related-work}
A classical problem in the matrix-free setting is the stochastic trace estimation \cite{hutchinson1989stochastic}, in which the trace of a matrix is approximated using a few matrix-vector products with random vectors. Several improvements on this approach have been developed, including variance-reduced methods such as Hutch++ \cite{meyer2021hutch++}, which exploit low-rank structure to accelerate convergence, and dynamic algorithms \cite{dharangutte2021dynamic,woodruff2022optimal}, which adaptively allocate samples to achieve higher accuracy. Related work also addressed the problem of estimating the spectral norm of a matrix using structured estimators based on rank-one vectors, see \cite{bujanovic2021norm}. Another closely related problem is the estimation of the matrix diagonal \cite{bekas2007estimator,baston2022stochastic,dharangutte2023tight}, for which recent works have provided algorithmic improvements and theoretical guarantees. 
\par 
The two-to-infinity norm is widely used as a theoretical tool to study statistical guarantees for algorithms in various areas of high-dimensional statistics and learning theory. Particular applications include bandit problems with specific reward structures \cite{jedra2024low}, singular subspace recovery \cite{cape2019two}, and clustering \cite{pensky2024davis}. 
\par
In machine learning algorithms, the two-to-infinity norm most often appears as part of the max-norm (see the notation section), an important tool for algorithm regularization. Its applications include, in particular, the matrix completion problem via matrix factorization \cite{srebro2004maximum}. In this setting, one can show that the max-norm serves as a surrogate for the rank of the approximator matrix. This idea was further developed in \cite{lee2010practical}, where the authors introduce gradient-based algorithms, such as the projected gradient method and the proximal point method, for solving the regularized matrix factorization problem. Max-norm regularizers are also an important part of many online frameworks for such problems, often combined with additional sparsity-inducing constraints, as in \cite{shen2014online}.
\par
Estimating matrix operator norms $\|A\|_{p \to q}$ induced by different combinations of $p$ and $q$ has recently attracted a lot of attention \cite{higham2000block,bujanovic2021norm,bresch2024matrix,naumov2025upper}. A foundational contribution in this direction is the work \cite{boyd1974power}, which proposes a general iterative algorithm for estimating $\|\cdot\|_{p \to q}$ norm for arbitrary $p$ and $q$, except for the cases of $\|\cdot\|_{1 \to 2}$ and $\|\cdot\|_{2 \to \infty}$ norms. Their approach is based on a generalization of the classical power method. Papers \cite{higham1992estimating,roth2020adversarial} extend the methodology of \cite{boyd1974power} and apply it to more specialized norms, including the $\|\cdot\|_{1 \to 2}$ and $\|\cdot\|_{2 \to \infty}$ norm cases. While \cite{boyd1974power} proves theoretical convergence results for certain types of matrices, the authors of \cite{higham1992estimating,roth2020adversarial} do not provide any theoretical guarantees for the $\|\cdot\|_{1 \to 2}$ and $\|\cdot\|_{2 \to \infty}$ cases. Furthermore, we provide an example below demonstrating that, when estimating the $\|\cdot\|_{2 \to \infty}$ norm, this method diverges with positive probability even when applied to a simple diagonal matrix.

\section{Two-to-Infinity Norm Estimation}
\label{sec:main_algo}

Here, we briefly introduce the adaptive power method proposed by \cite{higham1992estimating,roth2020adversarial}. Full pseudocode is provided in the supplementary paper. The method begins with an initial random vector $X_0$ (typically drawn from a Gaussian distribution), followed by an iterative update process:
\begin{equation*}
Y^i = \operatorname{dual}_{\infty}(AX^{i - 1}),
\quad X^i = \operatorname{dual}_2(A^\top Y^i),
\end{equation*}
where the dual operator is defined as
\begin{equation*}
\operatorname{dual}_p(x) =
\begin{cases}
\operatorname{sign}(x) \odot |x|^{p - 1} / \|x\|_p^{p - 1}, & \text{if } p < \infty, \\
|\mathcal{I}|^{-1} \operatorname{sign}(x) \odot \boldsymbol{1}_{\mathcal{I}}, & \text{if } p = \infty,
\end{cases}
\end{equation*}
where $\mathcal{I} := \{ i \in [d] : |x_i| = \|x\|_\infty \}$, and $\boldsymbol{1}_{\mathcal{I}} := \sum_{i \in \mathcal{I}} e_i$ is an indicator vector over the set $\mathcal{I}$. The final estimate after $m$ iterations is obtained by computing $\|AX^m\|_{\infty}$.
\begin{example}[Divergence of the Adaptive Power Method]
\label{example:divergence_power_method}
Let $A \in \mathbb{R}^{2 \times 2}$ be given by
\begin{equation*}
A = \begin{pmatrix}
2 & 0 \\
0 & 1 \\
\end{pmatrix}.
\end{equation*}
Then, with probability at least $0.295$, the Adaptive Power Method diverges when applied to the matrix $A$.
\end{example}
\begin{explanation}
Let $X = AX^0$, where $X^0 \sim \mathcal{N}(0, \operatorname{I}_2)$ is the initial vector, and $X = (X_1, X_2)^{\top}$. Then:
\begin{multline*}
\mathbb{P}(|X_1| < |X_2|) = \mathbb{P}(2|X^0_1| < |X^0_2|) = \mathbb{P}\left(\left|\frac{X^0_1}{X^0_2}\right| < \frac{1}{2} \right)
\\=
\int_{-1/2}^{1/2} \frac{1}{\pi (1 + t^2)} \text{ d} t = \frac{2}{\pi} \arctan(1/2) \approx 0.2951,
\end{multline*}
since the random variable $X^0_1 / X^0_2$ has a Cauchy distribution.
By the definition of $\operatorname{dual}_\infty$, we then have $Y^1 = \operatorname{sign}(X) \odot (0, 1)^\top$ with probability at least $0.295$. Then it is easy to check that
\begin{equation*}
X^1 = \frac{A^\top Y^1}{\|A^\top Y^1\|_2} = \operatorname{sign}(X)\odot (0, 1)^\top.
\end{equation*}
It is easy to verify that $X^i = Y^i = \operatorname{sign}(X) \odot (0, 1)^\top$ for any $i \geq 1$. Consequently, the final estimate with probability $\geq 0.295$ equals
\begin{equation*}
L = \| A X^m \|_{\infty} = 1,
\end{equation*}
which is incorrect, as $\|A\|_{2 \to \infty} = 2$.
\end{explanation}

\subsection{Hutchinson's Diagonal Estimator}
\label{subsec:hutchinson_diagonal}

Unlike the iterative algorithms, such as the adaptive power method, we focus on a stochastic estimator of the two-to-infinity norm, which is accurate in the limit of large number of matrix-vector products. Our algorithms build upon a well-known technique for estimating the diagonal of a square matrix using only matrix-vector products. This technique is known as the Hutchinson diagonal estimator \cite{bekas2007estimator,baston2022stochastic,dharangutte2023tight}. In this section, we introduce the Hutchinson diagonal estimator and provide concentration inequalities that are useful for our subsequent analysis.

\begin{definition}
\label{def:hutch_diag_est}
Let $X^1$, $\dots$, $X^m \in \{-1, 1\}^d$ be independent Rademacher random vectors. For a square matrix $A \in \mathbb{R}^{d \times d}$, the Hutchinson's diagonal estimator $D^m(A) \in \mathbb{R}^d$ is defined as
\begin{equation*}
    D^m(A) \coloneqq \frac{1}{m} \sum_{i=1}^{m} X^i \odot (A X^i).
\end{equation*}
\end{definition}
Notably, in this definition, instead of Rademacher vectors, it is possible to use Gaussian vectors or any mean-zero random vectors with an identity covariance matrix. However, we prefer Rademacher vectors because they yield an estimate with minimal variance (see Proposition 1 in \cite{hutchinson1989stochastic}).

This estimator is a natural extension of the classical Hutchinson method for trace estimation \cite{hutchinson1989stochastic}. It provides an unbiased estimate of the diagonal, moreover, its variance can be computed, for any $i \in [d]$, as 
\begin{equation*}
\operatorname{Var}[D^1_i(A)] = \sum_{j \neq i} A_{ij}^2.
\end{equation*}
Furthermore, the estimator enjoys high-probability error bounds that quantify its deviation from the true diagonal. In particular, we rely on the following result from~\cite{dharangutte2023tight}, which provides the following bound for the $\ell_2$ norm of the Hutchinson's estimator error:

\begin{theorem}[Theorem~1 in \cite{dharangutte2023tight}]
\label{theorem:tight_diagonal_bound}
Let $A \in \mathbb{R}^{d \times d}$, $m \in \mathbb{N}$, $\delta \in (0, 1]$. Then with probability at least $1 - \delta$:
\begin{equation*}
    \|D^m(A) - \diag(A) \|_2 \leq c \sqrt{\frac{\log(2 / \delta)}{m}} \| A - \diag(A) \|_F,
\end{equation*}
where $c$ is an absolute constant.
\end{theorem}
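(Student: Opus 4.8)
The plan is to reduce to the zero-diagonal case and then establish a \emph{dimension-free} concentration bound for an average of i.i.d.\ vector-valued quadratic chaoses. First set $B := A - \diag(A)$. Since $(X^i_j)^2 = 1$, we have $X^i \odot (A X^i) = \diag(A) + X^i \odot (B X^i)$, hence
\[
\Delta \;:=\; D^m(A) - \diag(A) \;=\; \frac{1}{m}\sum_{i=1}^m W^i ,\qquad W^i := X^i \odot (B X^i) = \diag(X^i)\,B\,X^i .
\]
Two elementary facts drive the argument: (i) the $W^i$ are i.i.d.\ and centered, since the $j$-th entry of $\PE W^i$ is $B_{jj}=0$; and (ii) because $\diag(X^i)$ is an orthogonal sign matrix, $\|W^i\|_2 = \|B X^i\|_2$, so $\PE\|W^i\|_2^2 = \trace(B^\top B) = \|B\|_F^2$ and, by independence and centering, $\PE\|\Delta\|_2^2 = \|B\|_F^2/m$. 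Thus $\|B\|_F/\sqrt{m}$ is exactly the natural scale; the theorem says deviations occur at that scale up to the usual $\sqrt{\log(2/\delta)}$ factor.

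The real difficulty is obtaining a bound with no dependence on the ambient dimension $d$. For fixed $u$ with $\|u\|_2 = 1$, $\langle u,\Delta\rangle = \tfrac{1}{m}\sum_i (X^i)^\top \tfrac{1}{2}\big(\diag(u)B + B^\top\diag(u)\big) X^i$ is a zero-diagonal quadratic form in the $md$ Rademacher variables with Frobenius parameter $O(\|B\|_F/\sqrt{m})$, so Hanson--Wright yields $|\langle u,\Delta\rangle|\lesssim \|B\|_F\sqrt{\log(1/\delta)/m}$ with probability $1-\delta$; but taking a supremum over an $\varepsilon$-net of the sphere reintroduces a spurious $\sqrt{d}$, inconsistent with $\PE\|\Delta\|_2^2 = \|B\|_F^2/m$. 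I would instead use decoupling: by the decoupling inequality for homogeneous chaos (which, up to absolute constants, also transfers to tail probabilities), $\Delta$ may be replaced by
\[
\widetilde\Delta \;=\; \frac{1}{m}\sum_{i=1}^m \diag(X^i)\,B\,(X')^i \;=\; \frac{1}{m}\sum_{i=1}^m \diag\!\big(B(X')^i\big)\,X^i ,
\]
with $(X')^i$ independent copies. Conditionally on the $(X')^i$, the coordinates of $\widetilde\Delta$ are independent, and $\|\widetilde\Delta\|_2$ is a function of the entries $\{X^i_l\}$ whose bounded differences satisfy $\sum_{i,l} c_{i,l}^2 = \tfrac{4}{m^2}\sum_i \|B(X')^i\|_2^2$; by Hanson--Wright applied to the block-diagonal quadratic form $\sum_i \|B(X')^i\|_2^2$, whose Frobenius and operator parameters are $\le \sqrt{m}\,\|B\|_F^2$ and $\le \|B\|_F^2$ (no $d$), this sum is $O(\|B\|_F^2/m)$ with the required probability. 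A conditional bounded-differences inequality then gives sub-Gaussian concentration of $\|\widetilde\Delta\|_2$ around $\PE[\|\widetilde\Delta\|_2 \mid X'] \le \big(\PE[\|\widetilde\Delta\|_2^2 \mid X']\big)^{1/2} = \big(\tfrac{1}{m^2}\sum_i \|B(X')^i\|_2^2\big)^{1/2}$, and undoing the decoupling finishes.

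An alternative, and more uniform, route is a direct moment bound: expanding $\PE\|\Delta\|_2^{2p} = m^{-2p}\PE\big[\big(\sum_{i,i'}\langle W^i,W^{i'}\rangle\big)^p\big]$, the diagonal part $\big(\sum_i \|B X^i\|_2^2\big)^p$ — a power of a sum of $m$ i.i.d.\ sub-exponential variables with mean $\|B\|_F^2$ — contributes $(Cp\|B\|_F^2/m)^p$, and one checks that the centered off-diagonal ($i\neq i'$) index pairings contribute only at the same or lower order, using $\|B^\top B\|_F,\|B^\top B\|_{\mathrm{op}}\le\|B\|_F^2$; then $\PE\|\Delta\|_2^{2p}\le (Cp\|B\|_F^2/m)^p$ for every integer $p\ge 1$, so $\|\Delta\|_2^2$ is sub-exponential with parameter $\|B\|_F^2/m$, and Markov's inequality optimized over $p$ gives the claim with absolute $c$ and no further case distinctions. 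In either route the main work is precisely the dimension-free control — the combinatorics of the degree-$4p$ expansion, respectively the Hanson--Wright estimate on $\sum_i\|B(X')^i\|_2^2$ (with the regime $m\lesssim\log(2/\delta)$, where the stated bound already exceeds $\|B\|_F$, dispatched by a direct Hanson--Wright bound on each $\|BX^i\|_2$ together with the triangle inequality). I expect the off-diagonal bookkeeping in the moment route to be the most error-prone step.
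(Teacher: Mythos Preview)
The paper does not contain a proof of this statement: it is quoted verbatim as Theorem~1 from \cite{dharangutte2023tight} and used only as a black box (in the proof of \Cref{theorem:twinest_pp_bound}). There is therefore nothing in the present paper to compare your proposal against.

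On its own merits, your outline is reasonable and identifies the genuine issue, namely that a naive Hanson--Wright bound for each direction $u$ followed by an $\varepsilon$-net over the sphere loses a factor $\sqrt{d}$, so some dimension-free argument is required. Both routes you sketch --- decoupling followed by a conditional bounded-differences inequality with a Hanson--Wright control of $\sum_i\|B(X')^i\|_2^2$, and a direct $p$-th moment expansion showing $\PE\|\Delta\|_2^{2p}\le (Cp\,\|B\|_F^2/m)^p$ --- are standard strategies for this kind of vector-valued Rademacher chaos, and either can be made to work. Your caveat that the off-diagonal combinatorics in the moment route is the delicate step is accurate; in the decoupling route the step that needs care is the transfer of the tail bound back from $\widetilde\Delta$ to $\Delta$ (decoupling for tails, not just moments), which you mention only in passing. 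If you want to see the specific argument the cited authors use, you would need to consult \cite{dharangutte2023tight} directly.
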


\subsection{Our method}
Now we describe our strategy for estimating $\|\cdot\|_{2 \to \infty}$ norm. The main idea is that the diagonal entries of the matrix $AA^\top$ correspond to the
squared $\ell_2$ norms of the rows of $A$. Therefore, the $\|\cdot\|_{2\to\infty}$ norm can be equivalently expressed as
\begin{equation*}
    \|A\|_{2\to\infty}^2 = \max_{i \in [d]} \operatorname{diag}(AA^\top)_i.
\end{equation*}
This identity suggests a natural strategy: instead of computing all row norms explicitly, we can estimate the diagonal of $AA^\top$ using the Hutchinson method, which only requires matrix-vector products with $A$ and $A^\top$. The final estimate of the $\|A\|_{2\to\infty}$ norm is then obtained by taking the maximum of the estimated diagonal.

However, estimating the maximum value through the direct application of Hutchinson's method introduces high variance, leading to a noisy approximation. To mitigate this, we can eliminate one source of randomness in the final estimate. Namely, let $D$ be the estimate of the diagonal of $ AA^\top $. While the entries of $ D $ are typically noisy, we can reduce variance by avoiding direct use of $ \max_i D_i $. Instead, we first identify the (random) index of the maximum estimated value, $ j = \arg\max_{\mathrm{i}} D_\mathrm{i} $, and then compute $\| A^{\top} e_j \|_{2} = \| A_j \|_2$. This approach significantly improves the quality of the estimate, which is supported by the ablation study carried out in \Cref{exp:synthetic}. The outlined procedure is referred to as the TwINEst algorithm and is presented in \Cref{algo:twinest}. Note that the algorithm steps given in lines $2$ to $4$ can be done either in parallel, or in a matrix form, if it is possible to store the matrix $X = [X^1,\ldots,X^m] \in \rset^{d \times m}$ and compute $X \odot AA^{\top} X$. This is the main practical advantage of our method over power-iteration based algorithms \cite{roth2020adversarial}.

\begin{algorithm}[t!]
\caption{TwINEst: \textbf{Tw}o-to-\textbf{I}nfinity \textbf{N}orm \textbf{Est}imation}
\label{algo:twinest}
\begin{algorithmic}%
  \REQUIRE{
    \emph{}\\
    Matrix-vector multiplication oracle for \( A \in \mathbb{R}^{d \times n} \), \\
    Matrix-vector multiplication oracle for \( A^T \in \mathbb{R}^{n \times d} \), \\
    Positive integer \( m \in \mathbb{N} \): number of random samples.
  }
  \ENSURE{
    \emph{}\\
    An estimate of the $\|A\|_{2 \to \infty}$ norm.
  }
\end{algorithmic}%
\begin{algorithmic}[1]%
\STATE Sample $m$  random Rademacher vectors $X^1$, $\dots$, $X^m$, where each \( X^i \in \{-1, 1\}^d \)
\FOR{each \( i = 1, 2, \dots, m \)}
    \STATE Compute \( t_i = X^i \odot AA^\top X^i \)
\ENDFOR
\STATE Compute $D = \frac{1}{m} \sum_{i=1}^{m} t_i \in \rset^{d}$\\
\COMMENT{\(D\) - estimate of the \(AA^\top\) diagonal}
\STATE Find \( j = \arg\max_{\mathrm{i}} D_\mathrm{i} \)
\STATE Compute $L = \| A^\top e_j \|_2$\\
\COMMENT{\(e_j\) - is the \(j\)-th standard basis vector}
\RETURN \( L \)
\end{algorithmic}%
\end{algorithm}

We now establish upper bounds on the sample complexity of TwINEst. In the context of randomized numerical linear algebra, sample complexity typically refers to the number of matrix-vector multiplications required to approximate some quantity within a specified error tolerance and failure probability. Beyond these bounds, we also derive an oracle complexity: the number of matrix–vector multiplications sufficient to return the correct answer with failure probability $\delta$, expressed in terms of the gap $\Delta$ between the largest squared $\ell_2$-norm among the rows of $A$ and the squared $\ell_2$-norm of the nearest non-maximal row.
Formally, with $M = \max_{i} \| A_i \|_2^2$, we define
\begin{equation*}
\Delta = M - \max_{i \,:\, \|A_i\|_2^2 < M} \| A_i \|_2^2.
\end{equation*}
A large $\Delta$ makes the largest $\ell_{2}$-norm easy to detect, reducing oracle complexity, whereas a small $\Delta$ requires more samples, since the top norms are close.

We now establish the oracle complexity of TwINEst, namely the number of samples required to recover the exact value of the matrix norm $\| \cdot \|_{2 \to \infty}$ with high probability.

\begin{theorem}[TwINEst Oracle Complexity]
\label{theorem:twinest_bound}
    Let $A \in \mathbb{R}^{d \times n}$, $m \in \mathbb{N}$. Let $T^m(A)$ be the result of Algorithm~\ref{algo:twinest} based on $m$ random vectors. Then, it suffices to take  
    \begin{equation*}
        m > \frac{8\log (2d / \delta)}{\Delta^2} \|AA^{\top} - \diag(AA^{\top})\|_{2 \to \infty}^2
    \end{equation*}
    to ensure $T^m(A) = \| A \|_{2 \to \infty}$ with probability at least $1 - \delta$.
\end{theorem}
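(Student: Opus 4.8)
The plan is to reduce the exact-recovery claim to an $\ell_\infty$-accuracy bound on the Hutchinson estimate of $\diag(AA^\top)$, and then to derive that accuracy bound from a coordinate-wise Rademacher-sum concentration inequality together with a union bound.

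First I would put $B := AA^\top$, so that $B_{ii} = \|A_i\|_2^2$ and the vector $D$ produced in \Cref{algo:twinest} is exactly $D^m(B)$ in the sense of \Cref{def:hutch_diag_est}. Write $M := \max_{i\in[d]} B_{ii}$ and let $\mathcal{S} := \{i\in[d] : B_{ii} = M\}$ be the set of indices attaining the largest row norm; then $\|A\|_{2\to\infty} = \sqrt{M}$, and the output $T^m(A) = \|A^\top e_j\|_2 = \|A_j\|_2$ with $j = \arg\max_i D_i$ equals $\|A\|_{2\to\infty}$ if and only if $j\in\mathcal{S}$. The deterministic core of the argument is that the event $\mathcal{E} := \{\|D^m(B) - \diag(B)\|_\infty < \Delta/2\}$ forces $j\in\mathcal{S}$ (the case $\mathcal{S}=[d]$ being trivial, with the convention $\Delta=+\infty$): on $\mathcal{E}$ every $i\in\mathcal{S}$ satisfies $D_i > M - \Delta/2$, while every $k\notin\mathcal{S}$ satisfies $B_{kk}\le M-\Delta$ by the definition of $\Delta$, hence $D_k < M-\Delta/2$, so any maximiser of $D$ (under any tie-breaking rule) lies in $\mathcal{S}$. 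It therefore suffices to show $\P(\mathcal{E}^c)\le\delta$ under the stated lower bound on $m$.

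For the concentration step I would expand, for each $i\in[d]$,
\begin{equation*}
D^m_i(B) - B_{ii} = \frac1m\sum_{\ell=1}^m\sum_{j\neq i} B_{ij}\,X^\ell_i X^\ell_j\eqsp.
\end{equation*}
Conditioning on the signs $(X^\ell_i)_{\ell=1}^m$ turns $m\,(D^m_i(B)-B_{ii})$ into a weighted sum of the independent Rademacher variables $(X^\ell_j)_{\ell\in[m],\,j\neq i}$ with coefficients $X^\ell_i B_{ij}$, whose squared $\ell_2$-norm equals $\sum_{\ell=1}^m\sum_{j\neq i} B_{ij}^2 = m\,\sigma_i^2$ with $\sigma_i^2 := \sum_{j\neq i} B_{ij}^2 = \operatorname{Var}[D^1_i(B)]$, independently of the conditioning. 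Hoeffding's inequality for Rademacher sums then gives
\begin{equation*}
\P\bigl(|D^m_i(B)-B_{ii}|\ge \Delta/2\bigr)\le 2\exp\!\Bigl(-\frac{m\,\Delta^2}{8\,\sigma_i^2}\Bigr)\eqsp,
\end{equation*}
and since the right-hand side does not depend on the conditioning, the bound holds unconditionally. This conditioning trick is the crux of the proof: handling the degree-two Rademacher chaos $\sum_{j\neq i} B_{ij} X_i X_j$ directly would only yield sub-exponential tails and a worse constant, whereas collapsing it to an honest Rademacher sum recovers the sub-Gaussian rate with the constant $8$.

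Finally I would bound $\sigma_i^2\le\sigma^2 := \max_{i\in[d]}\sigma_i^2 = \|AA^\top - \diag(AA^\top)\|_{2\to\infty}^2$, take a union bound over $i\in[d]$ to obtain $\P(\mathcal{E}^c)\le 2d\exp(-m\Delta^2/(8\sigma^2))$, and note that this is at most $\delta$ as soon as $m > 8\sigma^2\Delta^{-2}\log(2d/\delta)$. Combined with the reduction above, this yields $T^m(A) = \|A\|_{2\to\infty}$ with probability at least $1-\delta$. The only genuinely delicate point is the chaos-to-Rademacher-sum reduction; everything else is routine bookkeeping.
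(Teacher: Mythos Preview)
Your proposal is correct and follows essentially the same approach as the paper's proof: both reduce exact recovery to the event $\|D^m(B)-\diag(B)\|_\infty<\Delta/2$, and establish that event via a coordinate-wise Hoeffding bound plus a union bound over $i\in[d]$. The only cosmetic difference is in linearizing the quadratic $\sum_{j\neq i}B_{ij}X_iX_j$: you condition on $X_i$, whereas the paper observes directly that the variables $Y_j^k:=X_i^kX_j^k$ (for $j\neq i$, $k\in[m]$) are themselves i.i.d.\ Rademacher, so no conditioning is needed and the ``chaos'' concern you raise never actually arises.
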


\paragraph{Discussion.} The proof of \Cref{theorem:twinest_bound} is provided in the supplementary paper. Importantly, the structure of TwINEst allows us not only to bound the probability of deviating from the true value, but also to bound the probability that our algorithm returns the exact value. Notably, when $\Delta = 0$, identifying the maximum row norm becomes trivial, and the algorithm always returns the correct answer.

\begin{remark}[TwINEst Deviation Bound]
\label{remark:twinest_error_complexity}
    Let $A \in \mathbb{R}^{d \times n}$, $m \in \mathbb{N}$, and $\varepsilon > 0$. Let $T^m(A)$ be the result of Algorithm~\ref{algo:twinest} based on $m$ random vectors. Then, it suffices to take  
    \begin{equation*}
        m > \frac{8\log (2d / \delta)}{\varepsilon^2} \|AA^{\top} - \diag(AA^{\top})\|_{2 \to \infty}^2
    \end{equation*}
    to ensure $|(T^m(A))^2 - \| A \|_{2 \to \infty}^2| < \varepsilon$ with probability at least $1 - \delta$.
\end{remark}
\begin{proof}
    Proof is analogous to the proof of \Cref{theorem:twinest_bound}.
\end{proof}

Depending on $\Delta$, either \Cref{remark:twinest_error_complexity} or \Cref{theorem:twinest_bound} may be more useful. Results similar to those in \Cref{remark:twinest_error_complexity} have previously been obtained for the Hutchinson estimator \cite{roosta2015improved,jiang2021optimal}, Hutch++ \cite{meyer2021hutch++}, and several other methods. To the best of our knowledge, \Cref{theorem:twinest_bound} is the first bound on the oracle complexity of randomized estimation of the two-to-infinity norm.

\section{Improved Algorithm}
\label{sec:improved_algo}
In this section, we present an improved version of our algorithm, which incorporates the variance reduction technique of the Hutch++ method \cite{meyer2021hutch++}. Bearing similarity with the Hutch++ method, we call this modification TwINEst++. Rather than estimating the diagonal with the classical Hutchinson estimator, we adopt the unbiased Hutch++ diagonal modification \cite{han2024stochastic}. The key idea of this technique is to approximate the dominant low‑rank part of the matrix using a few random vectors, explicitly compute its diagonal, and estimate the diagonal of the residual term using Hutchinson's estimator. More precisely, let $S \in \mathbb{R}^{d \times r}$ consist of $r$ independent Rademacher vectors. Multiplying by $AA^\top$, we form the matrix $AA^{\top}S$, whose column span approximates the dominant eigenspace of $AA^{\top}$ with high probability. To approximate this eigenspace, we compute the thin $QR$-decomposition of the matrix $AA^{\top}S$ and further rely on the factor $Q \in \mathbb{R}^{d \times r}$. Using the orthogonal projector $P = QQ^{\top}$, we decompose $AA^\top$ into a low-rank approximation and a residual:
\begin{equation}
\label{eq:low_rank_decomposition}
    AA^\top = \underbrace{AA^{\top}P}_{\text{low-rank}} + \underbrace{AA^{\top}(I - P)}_{\text{residual term}}.
\end{equation}
The diagonal of the low-rank part can be computed explicitly:
\begin{equation*}
    \diag (AA^\top P) = (A (A^\top Q) \odot Q) \cdot (1 \ 1 \ ... \ 1)^\top,
\end{equation*}
this requires only $\mathcal{O}(dr)$ floating-point operations and $2r$ matrix-vector products with matrices $A$ and $A^\top$. To estimate the diagonal of the matrix $AA^{\top}$, we combine this exact computation with a stochastic estimate of the residual term:
\begin{equation*}
    \diag(AA^{\top})
    \approx
    \diag(AA^{\top}P)
    +
    D^m(AA^{\top}(I-P)).
\end{equation*}
This hybrid estimator typically achieves a significantly lower variance than the standard Hutchinson method, which in turn leads to better identification of the maximum row.
We follow the Hutch++ strategy and set $r = m / 3$, where $m$ is the whole budget for matrix-vector products (see ablation of different strategies for choosing $r$ in the supplement materials).
A detailed description of the algorithm is provided in \Cref{algo:twinest_plus_plus}. Below we provide the counterpart of \Cref{theorem:twinest_bound} for the TwINEst++ estimator.

\begin{algorithm}[t!]
\caption{TwINEst++}
\label{algo:twinest_plus_plus}
\begin{algorithmic}%
  \REQUIRE{
    \emph{}\\
    Matrix-vector multiplication oracle for \( A \in \mathbb{R}^{d \times n} \), \\
    Matrix-vector multiplication oracle for \( A^T \in \mathbb{R}^{n \times d} \), \\
    Positive integer \( m \in \mathbb{N} \): total sampling budget.
  }
  \ENSURE{
    \emph{}\\
    An estimate of the $\|A\|_{2 \to \infty}$ norm.
  }
\end{algorithmic}%
\begin{algorithmic}[1]
\STATE Sample \( \frac{m}{3} \) random Rademacher vectors $X^1$, $\dots$, $X^{\frac{m}{3}}$, where each \( X^i \in \{-1, 1\}^d \)
\STATE Sample random Rademacher matrix \( S \in \mathbb{R}^{d \times \frac{m}{3}} \), with i.i.d. \( \{-1, 1\}\) entries
\STATE Compute an orthonormal basis \( Q \) for \(AA^\top S\)\\
\COMMENT{via QR decomposition}
\FOR{each \( i = 1, 2, \dots, \frac{m}{3} \)}
    \STATE Compute \( t_i = X^i \odot AA^\top(I - QQ^\top) X^i \)
\ENDFOR
\STATE Compute \( \hat{D} = \frac{3}{m} \sum_{i=1}^{m/3} t_i \in \rset^{d} \) 
\STATE Compute \( D = \hat{D} + \diag(AA^\top QQ^\top) \)\\
\COMMENT{\(D\) - estimate of the \(AA^\top\) diagonal}
\STATE Find \( j = \arg\max_{\mathrm{i}} D_\mathrm{i} \)
\STATE Compute \( L = \| A^\top e_j \|_2 \)\\
\COMMENT{\(e_j\) - is the \(j\)-th standard basis vector}
\RETURN \( L \)
\end{algorithmic}
\end{algorithm}

\begin{figure*}[!t]
    \centering
    \begin{subfigure}[l]{0.85\columnwidth}
        \centering
        \includegraphics[width=\textwidth]{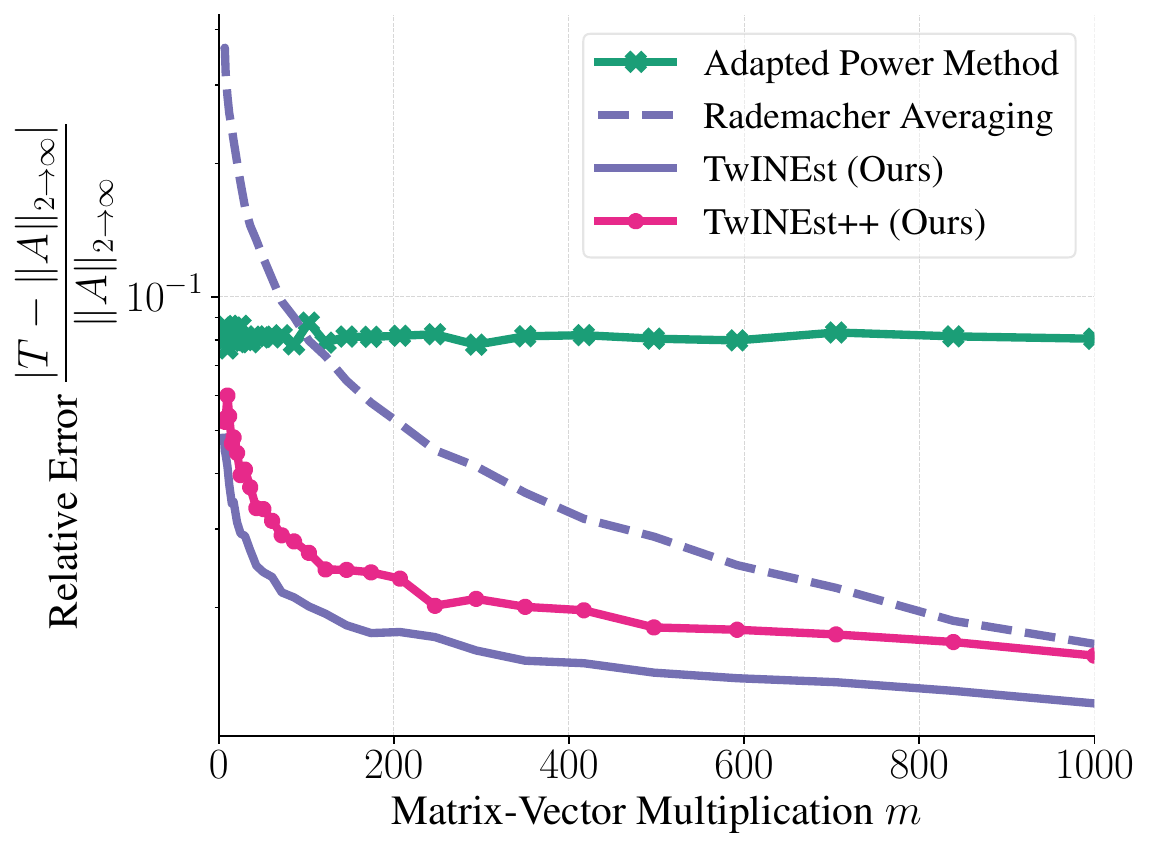}
        \subcaption{Synthetic data with $\Delta = 10^{-2}$.}
    \end{subfigure}\hfill
    \begin{subfigure}[l]{0.85\columnwidth}
        \centering
        \includegraphics[width=\textwidth]{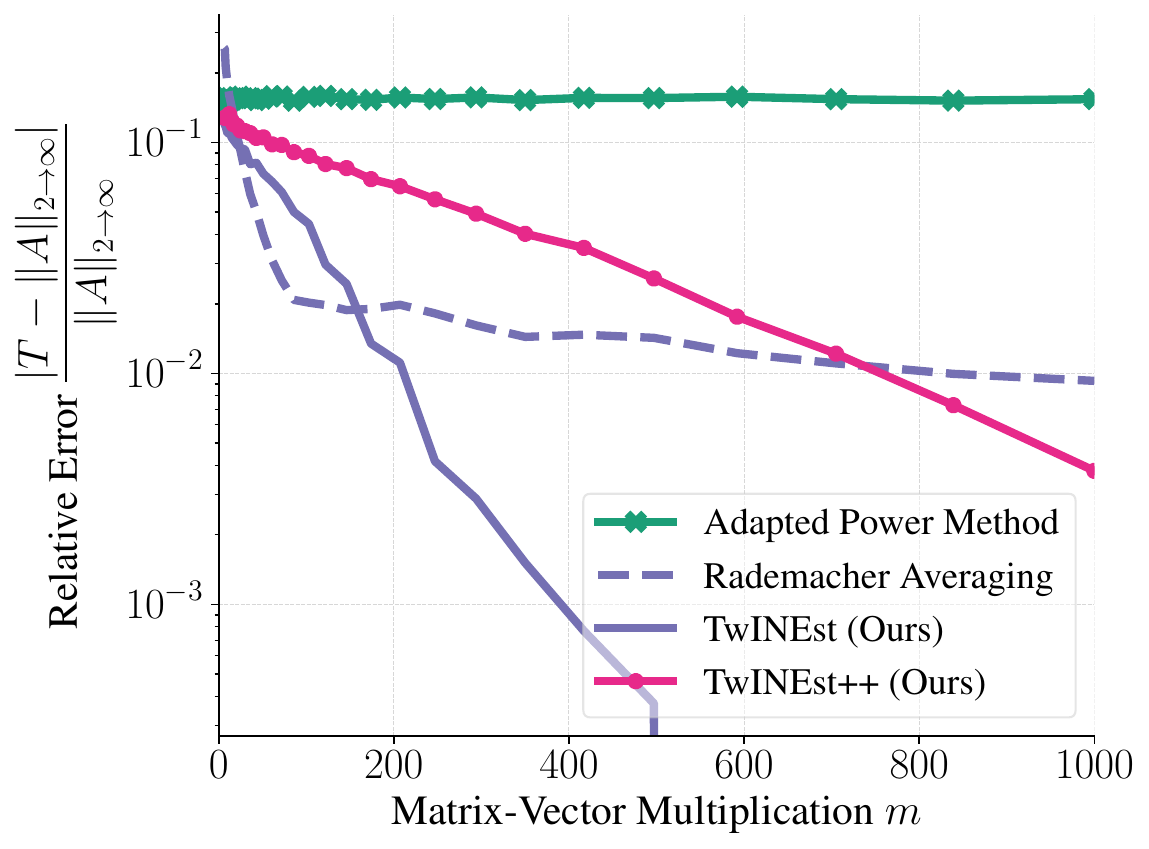}
        \subcaption{Synthetic data with $\Delta = 10^{-1}$.}
    \end{subfigure}
    \caption{Comparison of methods for estimating the two-to-infinity norm on random square matrices. Shown is the relative error versus the number of matrix-vector multiplications, averaged over 500 trials.}
    \label{fig:exps_synthetic}
\end{figure*}

\begin{theorem}[TwINEst++ Oracle Complexity]
\label{theorem:twinest_pp_bound}
Let $A \in \mathbb{R}^{d \times n}$, $m \in \mathbb{N}$ and $c \in \rset_+$ -- some constant. Let $T^m_{++}(A)$ be the output of Algorithm~\ref{algo:twinest_plus_plus} based on $m$ samples. Then, it suffices to choose
\begin{equation*}
m > c \cdot \left( \frac{\sqrt{\log(2/\delta)}}{\Delta} \|A\|_F^2 + \log(1/\delta) \right)
\end{equation*}
to ensure that $T^m_{++}(A) = \|A\|_{2 \to \infty}$ with probability at least $1 - \delta$.
\end{theorem}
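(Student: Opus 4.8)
The plan is to reduce the correctness event $\{T^m_{++}(A)=\|A\|_{2\to\infty}\}$ to a uniform bound on the diagonal estimate $D$ computed in \Cref{algo:twinest_plus_plus}, and then control the error of that estimate by combining the Hutchinson concentration bound of \Cref{theorem:tight_diagonal_bound} with a randomized low-rank approximation guarantee for the symmetric PSD matrix $W:=AA^\top$. Writing $M=\max_i\|A_i\|_2^2=\max_i\diag(W)_i$, the definition of $\Delta$ guarantees that every non-maximal row satisfies $\diag(W)_i\le M-\Delta$; hence if $\|D-\diag(W)\|_\infty<\Delta/2$, then $D_{i^\star}>M-\Delta/2>D_i$ for any maximal index $i^\star$ and any non-maximal index $i$, so $j=\arg\max_i D_i$ is maximal and the algorithm returns $\|A^\top e_j\|_2=\|A_j\|_2=\|A\|_{2\to\infty}$. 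It therefore suffices to show $\P(\|D-\diag(W)\|_\infty\ge\Delta/2)\le\delta$ under the stated choice of $m$. The key structural point is that since the low-rank diagonal $\diag(WQQ^\top)$ is computed exactly by the algorithm, the estimation error is confined to the residual: $D-\diag(W)=D^{m/3}(B)-\diag(B)$ with $B:=W(I-QQ^\top)$, where $D^{m/3}(B)$ is the Hutchinson diagonal estimator built from the Rademacher vectors $X^1,\dots,X^{m/3}$, which are independent of the sketch $S$ and hence of $Q$.

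Next I would condition on $Q$ and apply \Cref{theorem:tight_diagonal_bound} with $m/3$ samples and failure level $\delta/2$: on an event of conditional probability at least $1-\delta/2$,
\begin{equation*}
\|D-\diag(W)\|_\infty\le\|D^{m/3}(B)-\diag(B)\|_2\le c_0\sqrt{\tfrac{3\log(4/\delta)}{m}}\,\|B-\diag(B)\|_F\le c_0\sqrt{\tfrac{3\log(4/\delta)}{m}}\,\bigl\|(I-QQ^\top)W\bigr\|_F ,
\end{equation*}
using $\|\cdot\|_\infty\le\|\cdot\|_2$, $\|B-\diag(B)\|_F\le\|B\|_F$, and the symmetry of $W$ (so $\|W(I-QQ^\top)\|_F=\|(I-QQ^\top)W\|_F$). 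Everything then reduces to bounding the single-pass range-finder error $\|(I-QQ^\top)W\|_F$ for $Q=\mathrm{orth}(WS)$, where $S$ is a $d\times(m/3)$ Rademacher matrix.

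This last step is the main obstacle, and it combines two ingredients. For the PSD matrix $W$ with eigenvalues $\lambda_1\ge\lambda_2\ge\cdots\ge0$, the elementary inequality $\lambda_{k+1}\le\trace(W)/k$ yields $\|W-W_k\|_F^2=\sum_{i>k}\lambda_i^2\le\lambda_{k+1}\trace(W)\le\trace(W)^2/k=\|A\|_F^4/k$. On the other hand, a standard randomized range-finder guarantee (as in \cite{halko2011finding}, and underlying the Hutch++ analysis of \cite{meyer2021hutch++}) states that when the sketch width $\ell=m/3$ satisfies $\ell\gtrsim k+\log(1/\delta)$, one has $\|(I-QQ^\top)W\|_F\le C\|W-W_k\|_F$ with probability at least $1-\delta/2$. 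The delicate points are (i) that the sketch is Rademacher rather than Gaussian, which forces a sub-Gaussian / oblivious-subspace-embedding version of this statement, and (ii) tracking how the additive $\log(1/\delta)$ in the required sketch width propagates into the final bound. Choosing $k=\lfloor m/6\rfloor$ — which is admissible as soon as $m\gtrsim\log(1/\delta)$ — gives $\|(I-QQ^\top)W\|_F\le C'\|A\|_F^2/\sqrt{m}$ on this event.

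Finally, intersecting the events of the last two paragraphs (total failure probability $\le\delta$ by a union bound, using independence of $Q$ and the $X^i$'s) produces $\|D-\diag(W)\|_\infty\le c_1\sqrt{\log(4/\delta)}\,\|A\|_F^2/m$. Imposing $c_1\sqrt{\log(4/\delta)}\,\|A\|_F^2/m<\Delta/2$, together with the admissibility condition $m\gtrsim\log(1/\delta)$ used above, is exactly a condition of the form $m>c\bigl(\sqrt{\log(2/\delta)}\,\|A\|_F^2/\Delta+\log(1/\delta)\bigr)$, with all numeric constants and the discrepancy between $\log(4/\delta)$ and $\log(2/\delta)$ absorbed into $c$. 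This establishes the theorem; the only non-routine component is the Rademacher-sketch range-finder bound of the third paragraph, the remaining steps being bookkeeping around \Cref{theorem:tight_diagonal_bound} and basic spectral estimates for PSD matrices.
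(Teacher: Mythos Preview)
Your proof is correct and follows essentially the same route as the paper's: reduce correctness to $\|D-\diag(AA^\top)\|_\infty<\Delta/2$, apply \Cref{theorem:tight_diagonal_bound} to the residual $AA^\top(I-QQ^\top)$, use a randomized range-finder guarantee to bound $\|(I-QQ^\top)AA^\top\|_F\lesssim\|AA^\top-(AA^\top)_k\|_F$, and finish with the PSD trace inequality of \Cref{lemma:k_rank_approx}. The only cosmetic differences are that the paper parametrizes by fixing $k$ first and then setting the sketch width $l=c_1(k+\log(1/\delta))$ (rather than taking $k=\lfloor m/6\rfloor$ after the fact), and it cites \cite{musco2020projection} rather than \cite{halko2011finding,meyer2021hutch++} for the Rademacher range-finder bound you correctly flagged as the one non-routine ingredient.
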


\begin{proof}
The structure of the proof follows the same reasoning as in the case of \Cref{theorem:twinest_bound} for the base TwINEst algorithm. In particular, we again rely on a concentration inequality for the diagonal estimator. However, for TwINEst++, we employ a refined concentration result provided in \Cref{theorem:tight_diagonal_bound}, which yields a tighter control over the estimation error and enables the improved complexity result stated below.
Full proof is provided in the supplement paper.
\end{proof}

\Cref{theorem:twinest_pp_bound} shows that TwINEst++ achieves an improved oracle complexity compared to the original algorithm, particularly in challenging scenarios when $\Delta \to 0$, making identification of the correct row difficult. Specifically, the oracle complexity is reduced from $O(1/\Delta^2)$ in the original TwINEst algorithm to $O(1/\Delta)$ in TwINEst++.
Moreover, we expect that TwINEst++ performs particularly well in low-rank settings, exploiting the low-rank structure of the matrices, as in recent methods \cite{meyer2021hutch++,parkina2025coala}.
However, TwINEst’s oracle complexity depends on the norm of the matrix without its diagonal, while for TwINEst++ it depends on the norm of the whole matrix, so TwINEst may perform better for some matrices.

\section{Experiments}
\label{sec:experiments}

In this section, we present an empirical evaluation of the proposed algorithms. We focus on comparing the algorithms on synthetic and real-world matrices (see \Cref{exp:synthetic} and \Cref{exp:real_world}), and applications of our methods to image classification and recommender systems tasks (see \Cref{exp:dl_application} and \Cref{exp:recsys}).
All experiments are conducted on a single NVIDIA Tesla V100 GPU with 32GB memory.

\subsection{Synthetic Data}
\label{exp:synthetic}

In the following two sections, we compare the following methods:
\begin{itemize}
    \item \textbf{Adaptive Power Method.} A modification of the power iteration method for estimating the two-to-infinity norm (see \Cref{algo:adaptive_power_method} in the supplementary paper), based on \cite{higham1992estimating,roth2020adversarial}.
    \item \textbf{Rademacher Averaging.} A version of the TwINEst method that does not compute the exact norm of the candidate row with the maximum norm. Formally, for a matrix $A\in\mathbb{R}^{d \times n}$ we output $\sqrt{\max_i D^m_i(AA^\top)}$ (see notations in \Cref{algo:twinest}).
    \item \textbf{TwINEst.} Our algorithm introduced in \Cref{algo:twinest}. 
    \item \textbf{TwINEst++.} A variance-reduced version of TwINEst, described in \Cref{algo:twinest_plus_plus}.
\end{itemize}
The Adaptive Power Method lacks theoretical guarantees and may diverge on certain matrices (see \Cref{example:divergence_power_method}). Therefore, we hypothesize that our algorithms will outperform it.  Experiments with Rademacher Averaging serve as a natural ablation study for the TwINEst method.

For our synthetic experiments, we generate random Gaussian matrices $A \in \mathbb{R}^{5000\times5000}$. Specifically, we fix a parameter $\Delta \in (0, 1)$ and sample values $c_3, \dots, c_{5000} \sim \mathcal{U}[0,1]$, setting $c_1 = 1 + \Delta$, $c_2 = 1$. Each row of the matrix $A$ is normalized such that its squared $\ell_2$-norm is equal to $c_i$, ensuring a gap of magnitude $\Delta$ between the largest and second-largest squared row norms. The singular value distributions of these matrices are quite similar for different values of $\Delta$ (see \Cref{fig:exp_sv_delta} in the supplementary paper). Therefore, the parameter $\Delta$ may have a significant impact on the convergence rate.

The results of synthetic experiments for different values of $\Delta$ are illustrated in \Cref{fig:exps_synthetic}. As anticipated, TwINEst consistently outperforms Rademacher Averaging. The Adaptive Power Method fails to converge, as evidenced by its flat performance line. For a relatively large gap $\Delta = 10^{-1}$, the TwINEst algorithm rapidly converges, achieving accurate results consistently within approximately 400 iterations.

\subsection{WideResNet Jacobian}
\label{exp:real_world}

\begin{figure}[!t]
    \centering\includegraphics[width=0.8\columnwidth]{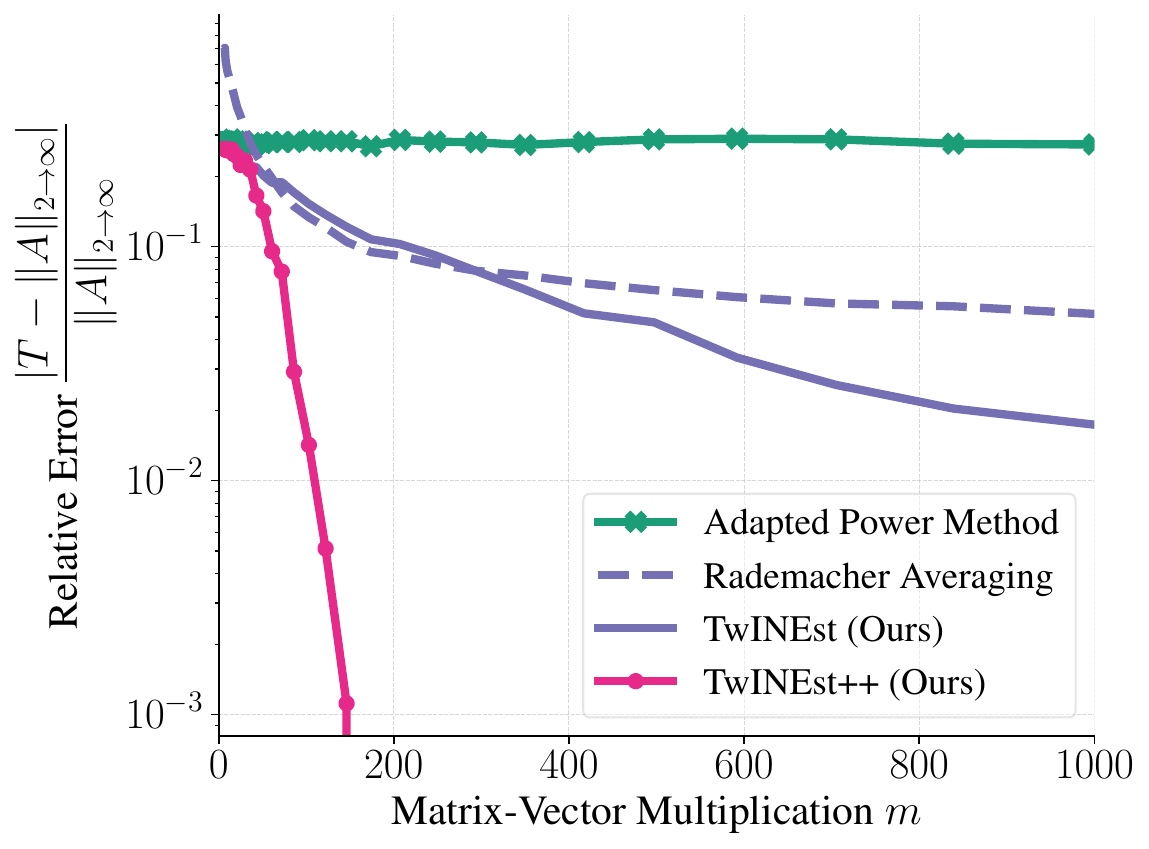}
    \caption{Comparison of methods for estimating the two-to-infinity norm of the Jacobian matrix of WideResNet-16-10 trained on CIFAR-100. The plot shows the relative error versus the number of matrix-vector multiplications, averaged over 500 trials.}
    \label{fig:exp_wideresnet}
\end{figure}

To validate our methods on real-world data, we evaluate them on the Jacobian matrix $J \in \mathbb{R}^{3 \cdot 32 \cdot 32 \times 100}$ of a WideResNet-16-10 \cite{zagoruyko2016wide} trained on CIFAR-100 \cite{krizhevsky2009learning}. Given the structure of $J$ (the number of columns is much smaller than the number of rows), we expect that the $AA^{\top}P$ term (see \Cref{eq:low_rank_decomposition}) in TwINEst++ will capture almost the entire span of $AA^{\top}$ with high probability. Therefore, TwINEst++ should outperform the basic TwINEst method on this matrix.

\Cref{fig:exp_wideresnet} confirms our hypothesis: the TwINEst++ algorithm achieves rapid convergence consistent with the rank of matrix $J$, significantly outperforming other algorithms. Again, the Adaptive Power Method fails to converge. For a comprehensive ablation, we provide a comparison between the relative error of the methods and their floating point operation counts (FLOPs) in \Cref{fig:flops}.

\subsection{Deep Learning Application}
\label{exp:dl_application}

\begin{table*}[th!]
\centering
\begin{tabular}{l|cccc|cccc}
\toprule
& \multicolumn{4}{c|}{\textbf{CIFAR-100}} 
  & \multicolumn{4}{c}{\textbf{TinyImageNet}} \\
\cmidrule{2-5} \cmidrule{6-9}
\textbf{Regularizer} & Acc.~\(\uparrow\) & FGSM~\(\uparrow\) & PGD~\(\uparrow\) & S. Rank~\(\downarrow\)
& Acc.~\(\uparrow\) & FGSM~\(\uparrow\) & PGD~\(\uparrow\) & S. Rank~\(\downarrow\) \\
\midrule
No regularization         & 75.5 $_{\pm0.2}$ & 24.4$_{\pm0.6}$ & 11.7 $_{\pm0.3}$ & 32.0 $_{\pm1.1}$ & 57.8 $_{\pm1.3}$ & 30.4$_{\pm0.3}$ & 20.2$_{\pm0.1}$ & 30.9 $_{\pm4.3}$ \\
Frobenius               &75.7 $_{\pm0.5}$ & 23.5$_{\pm0.2}$ & 13.3$_{\pm0.2}$ &31.6 $_{\pm0.2}$  & 58.6 $_{\pm0.3}$ & \bf31.1$_{\bf\pm0.2}$ & 20.7$_{\pm0.5}$ & 27.8 $_{\pm0.9}$ \\
Spectral                & 75.7 $_{\pm0.3}$ & 23.3$_{\pm0.7}$ & 11.3$_{\pm0.4}$ & 32.0 $_{\pm1.0}$  & 57.4 $_{\pm0.8}$ & 30.0$_{\pm1.1}$ & 20.0$_{\pm0.5}$ & 28.2 $_{\pm0.3}$ \\
Infinity           & 75.8 $_{\pm0.4}$ & 23.7$_{\pm0.7}$ & 11.1$_{\pm0.2}$ & 30.7 $_{\pm1.2}$ & 57.1 $_{\pm0.7}$  & 29.6$_{\pm1.4}$ & 19.7$_{\pm1.0}$ & 28.8 $_{\pm0.9}$  \\
\bf Two-to-Infinity (ours) & \bf 77.3 $_{\bf \pm0.1}$ & \bf 26.9$_{\bf\pm0.5}$ & \bf 14.5$_{\bf\pm0.5}$ & \bf 18.3 $_{\bf\pm0.8}$ & \bf 59.6 $_{\bf\pm0.9}$ & \underline{31.0$_{\pm1.0}$} & \bf23.4$_{\bf\pm0.7}$ & \bf 24.9 $_{\bf\pm0.3}$  \\
\bottomrule
\end{tabular}
\caption{Comparison of Jacobian regularization methods on CIFAR-100 and TinyImageNet datasets using WideResNet-16-10. Metrics are averaged over $3$ trials. Up arrow ($\uparrow$) indicates higher is better, while down arrow ($\downarrow$) indicates lower is better.}
\label{tab:jacobian}
\end{table*}

We study whether penalizing the two-to-infinity norm of the input-output Jacobian can improve the generalization ability and adversarial robustness of neural networks in image classification. Theorem 1 in \cite{roth2020adversarial} establishes a formal connection between $\ell_{p \to q}$ Jacobian norm regularization and adversarial training. This motivates our hypothesis that  two‑to‑infinity regularization can enhance adversarial robustness of deep neural networks.

We compare our regularizer to the standard Jacobian-based penalties: Frobenius norm \cite{hoffman2019robust}, spectral norm \cite{roth2020adversarial}, and infinity norm ($\ell_\infty$) \cite{roth2020adversarial}. For most image-classification datasets (e.g., CIFAR-100, TinyImageNet \cite{le2015tiny}), the number of output classes is much smaller than the number of input features (pixels). Consequently, the Jacobian of an image classifier trained on these datasets is a tall matrix. Hence, the two-to-infinity norm’s finer control over the Jacobian’s elements leads us to expect our regularizer to outperform standard norm penalties such as the Frobenius and spectral norms.

Formally, we minimize the following objective function:
\begin{equation*}
    \mathcal{L}(x, y) = \mathcal{L}_{\text{CE}}(f(x), y) + \lambda \cdot \|J_f(x)\|^2,
\end{equation*}
where $\mathcal{L}_{\text{CE}}$ is the cross-entropy loss, $x \in \rset^{d}$, $f(x) \in \rset^{M}$ is the output of a deep neural network, $M \in \mathbb{N}$ is the number of classes, $J_f(x) \in \rset^{d \times M}$ is the Jacobian of the logits $f(x)$ with respect to the input $x$, and $\|\cdot\|$ denotes one of the following norms: Frobenius, spectral, infinity, or two-to-infinity.

Explicit construction of the Jacobian of a deep network is computationally prohibitive in terms of both time and memory. Instead, we approximate each norm using the estimators described in the cited papers (see details in \Cref{appendix:reg_info}). For the two-to-infinity norm, we employ TwINEst algorithm. All of these estimators require only Jacobian–vector and vector–Jacobian products, which are supported by the automatic-differentiation frameworks. However, such operations cost roughly as much as a single backward pass, and naively applying the regularizer at every training step can slow down learning. To provide a more comprehensive ablation, we show that updating the regularization term once every $k$ iterations still outperforms other methods, while adding negligible wall-clock overhead relative to training without regularization. For more details, see \Cref{appendix:regularizer_ablation} in the supplementary paper.

Experiments are conducted on CIFAR-100 and Tiny\-Image\-Net datasets using the Wide\-Res\-Net-16-10 architecture, implemented in PyTorch. The hyperparameters are provided in \Cref{appendix:hyperparams}. We evaluate each method by reporting the final test accuracy, the stable rank of the Jacobian (computed as $\|J\|_F^2 / \|J\|_2^2$), and adversarial metrics: accuracy after FGSM \cite{goodfellow2015explaining} and PGD \cite{madry2018towards} attacks with $2$ steps.

As shown in \Cref{tab:jacobian}, our method enhances both the generalization performance and the adversarial robustness of WideResNet‑16‑10 on the CIFAR‑100 and TinyImageNet datasets. Other approaches show limited improvements over the baseline.

\subsection{RecSys Application}
\label{exp:recsys}

\begin{figure*}[ht]
    \centering
    \begin{subfigure}{0.32\textwidth}
        \centering
        \includegraphics[width=\textwidth]{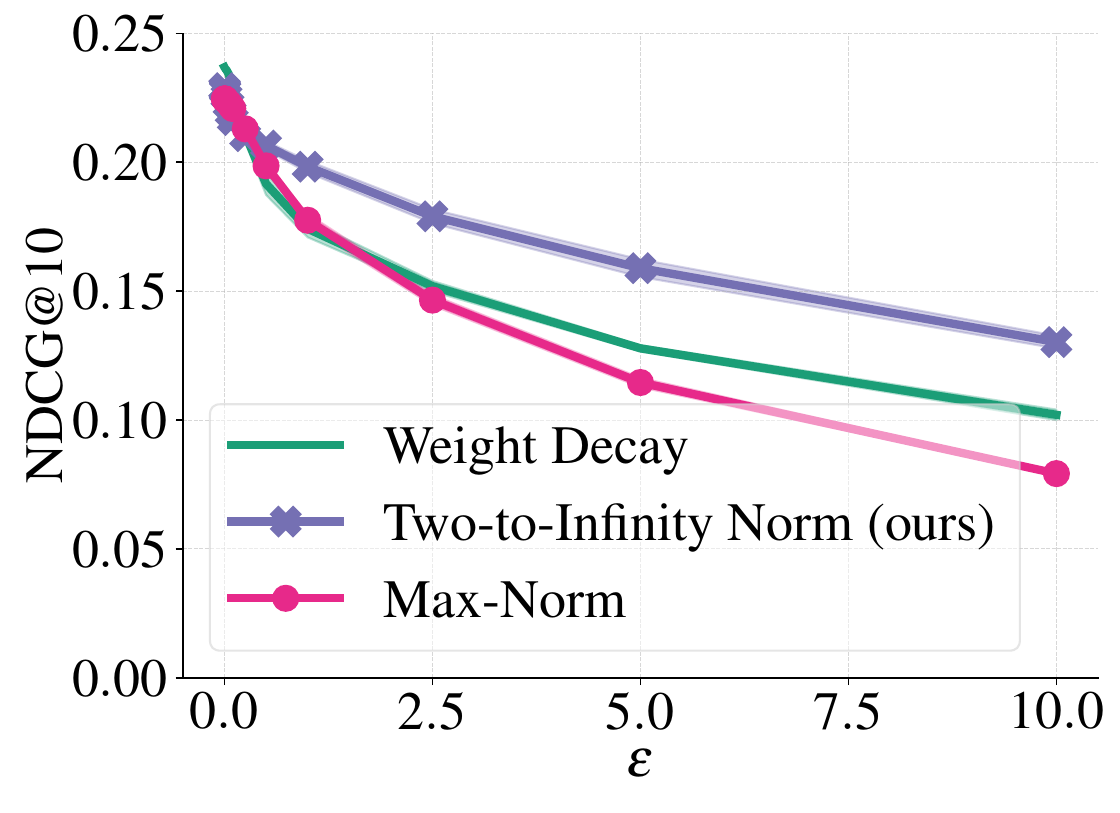}
        \subcaption{MovieLens-1M dataset.}
    \end{subfigure}\hfill
    \begin{subfigure}{0.32\textwidth}
        \centering
        \includegraphics[width=\textwidth]{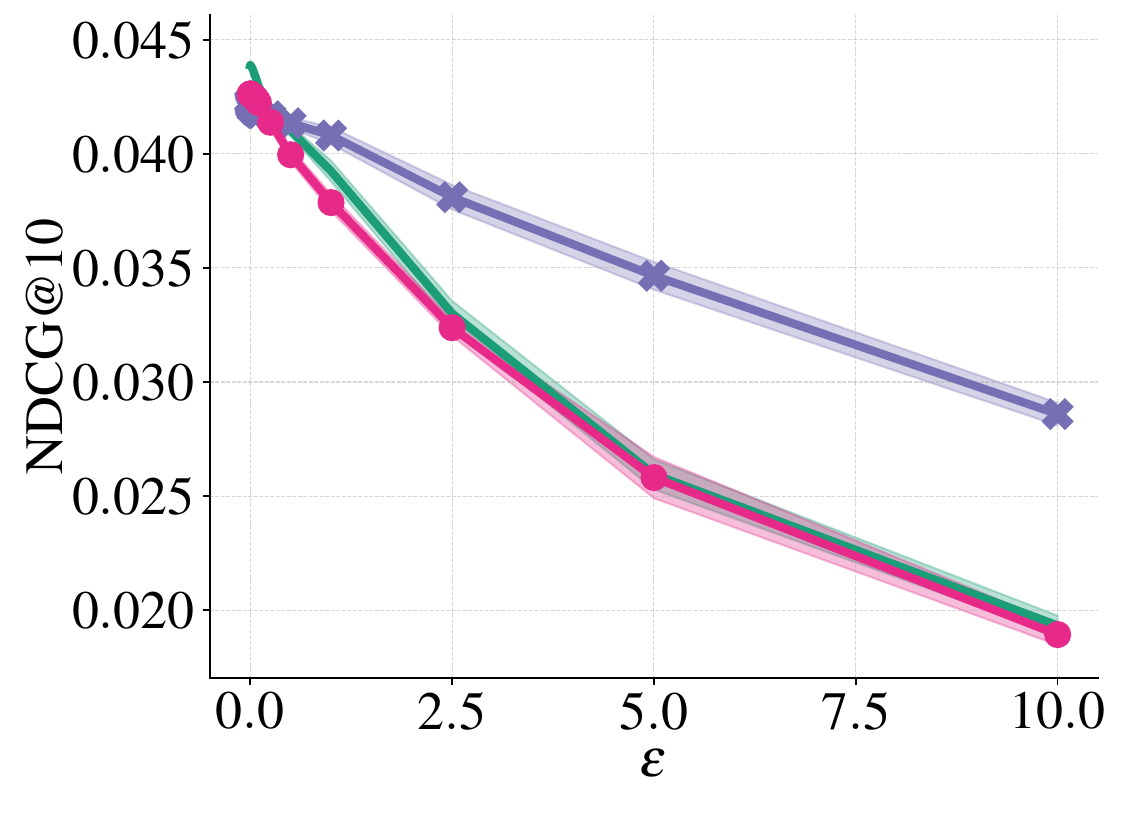}
        \subcaption{Yelp2018 dataset.}
    \end{subfigure}\hfill
    \begin{subfigure}{0.32\textwidth}
        \centering
        \includegraphics[width=\textwidth]{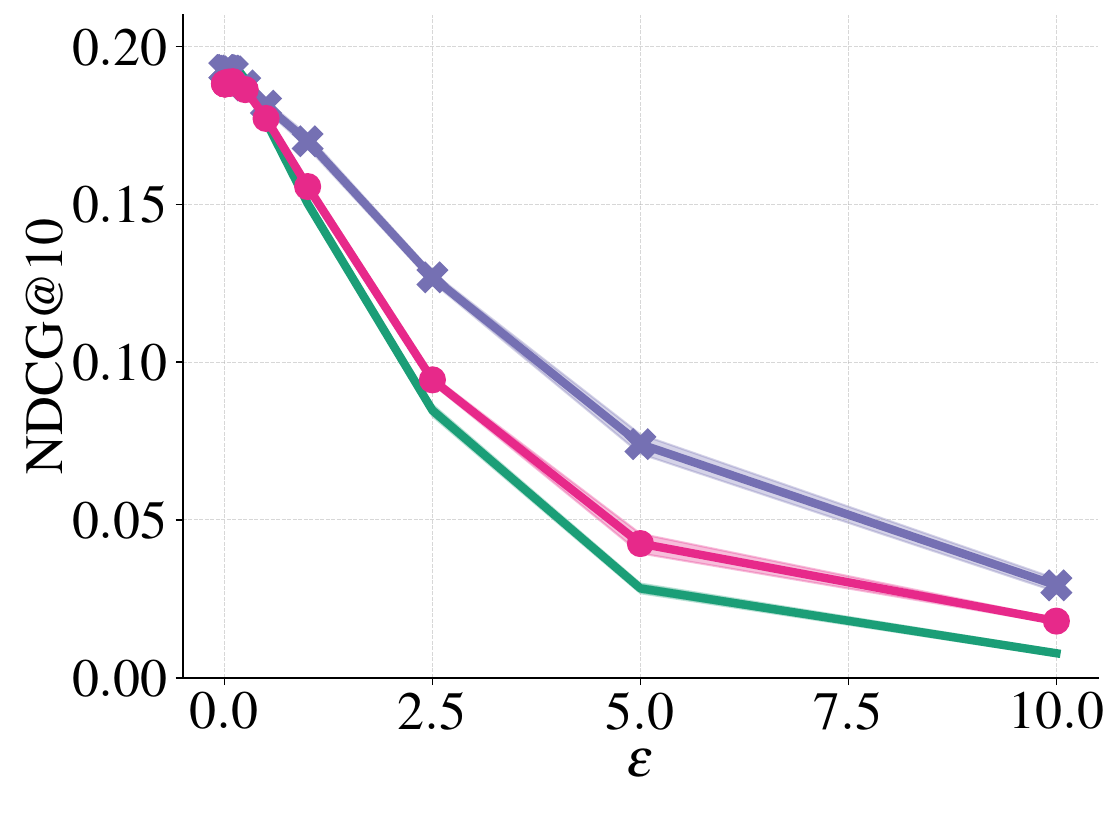}
        \subcaption{CiteULike dataset.}
    \end{subfigure}
    \caption{Comparison of different regularization methods for adversarial robustness of UltraGCN \cite{mao2021ultragcn}. Shown is the NDCG@10 metric(higher is better) versus the magnitude of the attack. Metric is averaged over 5 trials.}
    \label{fig:exp_recsys}
\end{figure*}

In this section, we study the application of our two-to-infinity norm regularization to improve the adversarial robustness of recommender systems. We focus on the collaborative filtering task with $d_u$ users and $d_i$ items, and use the classic matrix factorization framework:
\begin{equation}
\label{eq:matrix_decomp}
    R \approx UV^\top,
\end{equation}
where $R \in \mathbb{R}^{d_u \times d_i}$ is the user-item interaction matrix, and $U \in \mathbb{R}^{d_u \times r}$, $V \in \mathbb{R}^{d_i \times r}$ are users and items embedding matrices, respectively. The predicted interaction scores form the matrix $\hat{R} = UV^\top \in \mathbb{R}^{d_u \times d_i}$, where $\hat{R}_{u,i}$ denotes the predicted score for the interaction between the $u$-th user and the $i$-th item.

Some collaborative filtering models (e.g., \cite{rendle2009bpr, mao2021ultragcn}) can be expressed within the framework described in \Cref{eq:matrix_decomp}. We follow the adversarial setting introduced in \cite{he2018adversarial}, and adapt it to the UltraGCN model \cite{mao2021ultragcn}. The UltraGCN loss function is defined as:
\begin{equation*}
    \mathcal{L}(U, V) = \mathcal{L}_{\text{BCE}}(U, V) + \gamma \cdot \mathcal{L}_{\text{C}}(U, V) + \lambda \cdot (\|U\|_F^2 + \|V\|_F^2),
\end{equation*}
where \( \mathcal{L}_{\text{BCE}} \) is the binary cross-entropy loss, and $\mathcal{L}_{\text{C}}$ is a constraint loss defined in \cite{mao2021ultragcn}.

In the adversarial setting, the goal is to find a small perturbation of the model parameters that significantly degrades prediction quality. Since only the binary cross-entropy term directly affects recommendation accuracy, we aim to find perturbations that maximize this component:
\begin{equation*}
    \Delta_U^*, \Delta_V^* \in \argmax_{\|\Delta_U\|_2, \|\Delta_V\|_2 \leq \varepsilon} \mathcal{L}_{\text{BCE}}(U + \Delta_U, V + \Delta_V),
\end{equation*}
where \( \varepsilon \) controls the attack magnitude. A practical way to obtain such perturbations is via:
\begin{equation*}
    \Delta_U^{\text{adv}} = \varepsilon \cdot \frac{\Gamma_U}{\|\Gamma_U\|_2}, \quad \text{and} \quad \Delta_V^{\text{adv}} = \varepsilon \cdot \frac{\Gamma_V}{\|\Gamma_V\|_2},
\end{equation*}
where $\Gamma_U$ and $\Gamma_V$ are the gradients of $\mathcal{L}_{\text{BCE}}$ with respect to $U$ and $V$, respectively. To improve the adversarial robustness of the UltraGCN model, we modify its loss function by replacing the standard weight decay term ($\|U\|_F^2 + \|V\|_F^2$) with a regularization term based on the two-to-infinity norm of the score matrix. This regularizer provides better control over the embedding matrices by taking into account the impact of every user embedding and penalizing the largest ones. Intuitively, this should prevent the model from overfitting to a small subset of popular users. Specifically, our loss function becomes:
\begin{equation*}
    \mathcal{L}_{\text{ours}}(U, V) = \mathcal{L}_{\text{BCE}}(U, V) + \gamma \cdot \mathcal{L}_{\text{C}}(U, V) + \lambda \cdot \|\hat{R}\|_{2 \to \infty}^2.
\end{equation*}
Note that computing the full score matrix $\hat{R}$ at every training iteration is computationally prohibitive due to its large size and memory requirements. To address this, we approximate the two-to-infinity norm using our TwINEst algorithm.
We compare our two-to-infinity regularizer with the standard weight decay used in the UltraGCN model, as well as with the two-to-infinity norm of the factors ($\|U\|_{2\to\infty}^2 + \|V\|_{2\to\infty}^2$), which is closely related to the max-norm regularizer popular in recommender systems \cite{srebro2004maximum}.

Experiments are conducted on the MovieLens-1M \cite{harper2015movielens}, Yelp-2018 \cite{asghar2016yelp}, and CiteULike \cite{wang2013collaborative} datasets, using the UltraGCN architecture implemented in PyTorch. The hyperparameter configurations and the dataset preprocessing pipeline are provided in \Cref{appendix:hyperparams_recsys}. We evaluate methods by reporting the NDCG@10 metric (which rewards ranking relevant items near the top) \cite{jarvelin2002cumulated}. Additional results using other metrics are provided in \Cref{appendix:add_metrics_recsys}.

Results are shown in \Cref{fig:exp_recsys}. On every dataset, our regularization demonstrates improved robustness under adversarial attacks of moderate to high magnitude. While our method achieves comparable metrics to the baselines under no attack, it exhibits greater stability under strong perturbations.

\section{Conclusion}
\label{sec:conclusion}

In this paper, we propose two novel matrix-free stochastic algorithms for estimating the two-to-infinity and one-to-two norms, and provide a theoretical analysis of their behavior. Our empirical results demonstrate that the proposed methods outperform existing approaches in terms of both accuracy and computational efficiency. Furthermore, we show that our algorithms can be easily integrated into deep learning pipelines and applied to improve robustness to adversarial attacks in recommender systems. Possible directions for future research include establishing lower bounds on the sample and oracle complexity of the two-to-infinity norm estimation or exploring additional applications of two-to-infinity norm estimation in the matrix-free setting.

\section*{Acknowledgements}
The work was supported by the grant for research centers in the field of AI provided by the Ministry of Economic Development of the Russian Federation in accordance with the agreement 000000C313925P4E0002 and the agreement with HSE University № 139-15-2025-009. This research was supported in part through computational resources of HPC facilities at HSE University \cite{kostenetskiy2021hpc}.

\bibliography{aaai2026}

\onecolumn

\appendix
\label{appendix}

\section{Technical Lemmas}
\label{appendix:tech_lemmas}

\begin{lemma}
\label{lemma:diag_one_element}
Let $A \in \mathbb{R}^{d \times d}$, $m \in \mathbb{N}$, $i \in [d]$, and $\varepsilon \geq 0$. Then
\begin{equation*}
    \mathbb{P}\left(|D^m_i(A) - A_{ii}| \geq \varepsilon\right)
    \leq
    2 \exp\left(-\frac{\varepsilon^2 m}{2 (\|A_i\|_2^2 - A_{ii}^2)}\right).
\end{equation*}
\end{lemma}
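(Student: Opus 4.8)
The plan is to reduce the statement to a standard sub-Gaussian tail bound for an average of $m$ independent, bounded-variance Rademacher forms. Write $X^1,\dots,X^m\in\{-1,1\}^d$ for the i.i.d.\ Rademacher vectors defining $D^m(A)$. Expanding the $i$-th coordinate and using $(X^k_i)^2=1$,
\begin{equation*}
D^m_i(A) - A_{ii} = \frac{1}{m}\sum_{k=1}^m\Big(X^k_i (AX^k)_i - A_{ii}\Big) = \frac{1}{m}\sum_{k=1}^m \sum_{j\neq i} A_{ij}\,X^k_i X^k_j .
\end{equation*}
Thus $D^m_i(A)-A_{ii}$ is an average of $m$ i.i.d.\ copies of $Z := \sum_{j\neq i}A_{ij}X_iX_j$, and the whole argument hinges on showing that $Z$ is sub-Gaussian with variance proxy $\sigma^2 := \|A_i\|_2^2 - A_{ii}^2 = \sum_{j\neq i}A_{ij}^2$.

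For the moment generating function of $Z$ I would condition on $X_i$. Given $X_i\in\{-1,1\}$, the variables $\{X_iX_j\}_{j\neq i}$ are i.i.d.\ Rademacher, so Hoeffding's lemma applied to the weighted Rademacher sum $\sum_{j\neq i}A_{ij}(X_iX_j)$ yields
\begin{equation*}
\mathbb{E}\big[e^{\lambda Z}\,\big|\,X_i\big] \leq \exp\!\Big(\tfrac{\lambda^2}{2}\sum_{j\neq i}A_{ij}^2\Big) = \exp\!\big(\tfrac{\lambda^2\sigma^2}{2}\big),
\end{equation*}
and since the right-hand side is deterministic the same bound holds unconditionally. (Equivalently, one checks that the map $(X_i,(X_j)_{j\neq i})\mapsto(X_i,(X_iX_j)_{j\neq i})$ is a bijection on $\{-1,1\}^d$, so $(X_iX_j)_{j\neq i}$ is itself a vector of i.i.d.\ Rademacher entries.) By independence of $X^1,\dots,X^m$, the MGF of the average factorizes, giving $\mathbb{E}\big[e^{\lambda(D^m_i(A)-A_{ii})}\big] \leq \exp\!\big(\lambda^2\sigma^2/(2m)\big)$, i.e.\ the centered estimator is sub-Gaussian with proxy $\sigma^2/m$.

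It then remains to run the standard Chernoff argument: $\mathbb{P}(D^m_i(A)-A_{ii}\geq\varepsilon)\leq e^{-\lambda\varepsilon}\exp(\lambda^2\sigma^2/(2m))$, optimized at $\lambda = m\varepsilon/\sigma^2$ to give $\exp(-\varepsilon^2 m/(2\sigma^2))$; the lower tail is handled symmetrically by replacing $Z$ with $-Z$, and a union bound over the two tails produces the factor $2$ and the claimed inequality. I do not expect a genuine obstacle: the only point requiring care is that the quadratic terms $X^k_iX^k_j$ are not obviously independent of $X^k_i$, so the cleanest route is the conditioning step above, after which the proof is textbook sub-Gaussian concentration. (One should also note the degenerate case $\sigma^2=0$, where the bound holds trivially by the convention that the right-hand side equals $0$ for $\varepsilon>0$ and $2$ for $\varepsilon=0$.)
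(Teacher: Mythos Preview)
Your argument is correct and matches the paper's proof: both expand $D^m_i(A)-A_{ii}$ as $\frac{1}{m}\sum_{k}\sum_{j\neq i}A_{ij}X^k_iX^k_j$, observe that the products $X^k_iX^k_j$ are again i.i.d.\ Rademacher (the paper states this directly via the bijection you mention; you justify it by conditioning on $X_i$), and then apply Hoeffding's inequality. The only cosmetic difference is that you unwind Hoeffding into the explicit sub-Gaussian MGF and Chernoff optimization, whereas the paper simply cites the inequality.
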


\begin{proof}
\label{appendix:proof_diag_one_element}

This statement follows from Theorem 2 in~\cite{baston2022stochastic}, but we provide an independent argument. Since $(X^k_i)^2 = 1$ for Rademacher random variables,
\begin{equation*}
    D^m_i(A) - A_{ii} = \frac{1}{m} \sum\limits_{k=1}^{m} (X^k \odot AX^k)_i - A_{ii}
    = \frac{1}{m} \sum\limits_{k=1}^{m} \left( A_{ii} (X^k_i)^2 + \sum_{j \neq i} X^k_i A_{ij} X^k_j \right) - A_{ii}
    = \frac{1}{m} \sum\limits_{k=1}^{m} \sum_{j \neq i} X^k_i A_{ij} X^k_j.
\end{equation*}
Define $Y^k_j = X^k_i X^k_j$. Since the product of two independent Rademacher variables is again Rademacher, and they remain mutually independent,
\begin{equation*}
    \mathbb{P}\left(|D^m_i(A) - A_{ii}| \geq \varepsilon\right)
    =
    \mathbb{P}\left(\left|\sum\limits_{k=1}^{m} \sum_{j \neq i} \frac{A_{ij}}{m} Y^k_j\right| \geq \varepsilon\right).
\end{equation*}
Applying Hoeffding’s inequality (see~\cite{vershynin2018high}) yields the desired result.
\end{proof}

\begin{lemma}
\label{lemma:concentration_inf_norm}
Let $A \in \mathbb{R}^{d \times d}$, $m \in \mathbb{N}$, $\varepsilon \geq 0$, and let $\bar{A}$ be the matrix $A$ with diagonal entries set to zero. Then
\begin{equation*}
    \mathbb{P}\left(\|D^m(A) - \diag(A)\|_{\infty} \geq \varepsilon\right)
    \leq
    2d \exp\left(-\frac{\varepsilon^2 m}{2 \|\bar{A}\|_{2 \to \infty}^2}\right).
\end{equation*}
\end{lemma}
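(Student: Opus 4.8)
The plan is a straightforward union bound over the $d$ coordinates, feeding each term into \Cref{lemma:diag_one_element}. First I would write the $\ell_\infty$ deviation as a union of coordinatewise events:
\begin{equation*}
\left\{\|D^m(A) - \diag(A)\|_{\infty} \geq \varepsilon\right\} = \bigcup_{i=1}^{d} \left\{|D^m_i(A) - A_{ii}| \geq \varepsilon\right\},
\end{equation*}
so that $\mathbb{P}(\|D^m(A) - \diag(A)\|_{\infty} \geq \varepsilon) \leq \sum_{i=1}^{d} \mathbb{P}(|D^m_i(A) - A_{ii}| \geq \varepsilon)$.

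Next, I would apply \Cref{lemma:diag_one_element} to each summand, giving the bound $2\exp\bigl(-\varepsilon^2 m / (2(\|A_i\|_2^2 - A_{ii}^2))\bigr)$ for the $i$-th term. The only thing to observe is that $\|A_i\|_2^2 - A_{ii}^2 = \sum_{j \neq i} A_{ij}^2 = \|\bar{A}_i\|_2^2$ is exactly the squared $\ell_2$-norm of the $i$-th row of $\bar{A}$, hence $\|A_i\|_2^2 - A_{ii}^2 \leq \max_{k \in [d]} \|\bar{A}_k\|_2^2 = \|\bar{A}\|_{2 \to \infty}^2$. Since $x \mapsto \exp(-c/x)$ is increasing in $x > 0$ for $c > 0$, enlarging the denominator only enlarges each exponential, so every term is at most $2\exp\bigl(-\varepsilon^2 m / (2\|\bar{A}\|_{2 \to \infty}^2)\bigr)$. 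Summing over $i \in [d]$ yields the claimed factor $2d$.

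There is essentially no obstacle here; the only edge case worth a one-line remark is $\|\bar{A}\|_{2 \to \infty} = 0$, i.e.\ $A$ diagonal, where the estimator is exact and the left-hand side is $0$ for $\varepsilon > 0$ (and trivially $\leq 2d$ when $\varepsilon = 0$), so the inequality holds under the usual convention for the degenerate exponent. One could also note that the same argument would go through with a Hoeffding bound applied directly, but routing through \Cref{lemma:diag_one_element} keeps the constants consistent with the rest of the paper.
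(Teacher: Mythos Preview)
Your proposal is correct and matches the paper's proof essentially line for line: union bound over the $d$ coordinates, apply \Cref{lemma:diag_one_element} to each term, then replace $\|A_i\|_2^2 - A_{ii}^2 = \|\bar{A}_i\|_2^2$ by the uniform upper bound $\|\bar{A}\|_{2\to\infty}^2$ and sum. Your added remark on the degenerate diagonal case is a harmless refinement the paper omits.
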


\begin{proof}
\label{appendix:proof_concentration_inf_norm}

\begin{align*}
    \mathbb{P}\left(\|D^m(A) - \diag(A)\|_{\infty} \geq \varepsilon\right)
    &= \mathbb{P}\left(\max_i |D^m_i(A) - \diag(A)_i| \geq \varepsilon\right)\\
    &\leq \sum\limits_{i=1}^{d} \mathbb{P}\left(|D^m_i(A) - \diag(A)_i| \geq \varepsilon\right) \tag{By the union bound} \\
    & \leq \sum\limits_{i=1}^{d} 2\exp\left(-\frac{\varepsilon^2 m}{2 \|\bar{A}_i\|_2^2}\right) \tag{By Lemma~\ref{lemma:diag_one_element}} \\
    & \leq
    2d \exp\left(-\frac{\varepsilon^2 m}{2 \|\bar{A}\|_{2 \to \infty}^2}\right).
\end{align*}
\end{proof}

\begin{lemma}
\label{lemma:k_rank_approx}
Let $k \in \mathbb{N}$ and let $A \in \mathbb{R}^{d \times d}$ be a positive semidefinite (PSD) matrix. Denote by $A_k = \arg\min_{B: \rank(B) \leq k} \| A - B \|_F$ the best rank-$k$ approximation of $A$ in the Frobenius norm. Then,
\begin{equation*}
    \| A - A_k \|_F \leq \frac{1}{\sqrt{k}} \trace(A).
\end{equation*}
\end{lemma}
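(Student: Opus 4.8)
The plan is to reduce everything to a statement about the eigenvalues of $A$ via the spectral theorem. Since $A$ is PSD, write $A = \sum_{i=1}^d \lambda_i u_i u_i^\top$ with $\lambda_1 \geq \lambda_2 \geq \dots \geq \lambda_d \geq 0$ and $\{u_i\}$ orthonormal. By the Eckart--Young--Mirsky theorem, the best rank-$k$ Frobenius approximation is $A_k = \sum_{i=1}^{\min(k,d)} \lambda_i u_i u_i^\top$, so
\begin{equation*}
\|A - A_k\|_F^2 = \sum_{i=k+1}^d \lambda_i^2,
\end{equation*}
where the sum is empty (hence the bound trivial) if $d \leq k$. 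Also $\trace(A) = \sum_{i=1}^d \lambda_i$. Thus it remains to show $\sum_{i=k+1}^d \lambda_i^2 \leq \frac{1}{k}\bigl(\sum_{i=1}^d \lambda_i\bigr)^2$.

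Next I would exploit the ordering of the eigenvalues. For every $i > k$ and every $j \le k$ we have $\lambda_i \le \lambda_j$, and averaging over $j \in \{1,\dots,k\}$ gives $\lambda_i \le \frac{1}{k}\sum_{j=1}^k \lambda_j$ for all $i > k$. Using this to pull one factor out of each squared term,
\begin{equation*}
\sum_{i=k+1}^d \lambda_i^2 \le \Bigl(\max_{i > k}\lambda_i\Bigr)\sum_{i=k+1}^d \lambda_i \le \frac{1}{k}\Bigl(\sum_{j=1}^k \lambda_j\Bigr)\Bigl(\sum_{i=k+1}^d \lambda_i\Bigr).
\end{equation*}
Finally, for nonnegative reals $a,b$ one has $ab \le (a+b)^2$, applied with $a = \sum_{j=1}^k \lambda_j$ and $b = \sum_{i=k+1}^d \lambda_i$, whose sum is $\trace(A)$; this yields $\sum_{i=k+1}^d \lambda_i^2 \le \frac{1}{k}\trace(A)^2$, and taking square roots gives the claim.

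There is no real obstacle here; the argument is elementary once the spectral reduction is in place. The only points requiring a little care are (i) citing Eckart--Young in the PSD setting so that the residual is exactly the tail sum of \emph{squared} eigenvalues, and (ii) handling the degenerate case $d \le k$ (where $A_k = A$ and the left side is $0$) and the case where some $\lambda_j = 0$ for $j \le k$ (harmless, since the bound $\lambda_i \le \frac1k\sum_{j=1}^k\lambda_j$ still holds). Everything else is a two-line eigenvalue estimate.
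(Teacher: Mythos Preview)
Your proof is correct and follows essentially the same route as the paper: both reduce to eigenvalues via the spectral decomposition and Eckart--Young, bound $\sum_{i>k}\lambda_i^2 \le \lambda_{k+1}\sum_{i>k}\lambda_i$, and then use $\lambda_{k+1}\le \tfrac{1}{k}\sum_{j\le k}\lambda_j$ to extract the $1/k$ factor. The only cosmetic difference is in the final step, where the paper bounds each factor separately by $\trace(A)$ while you use $ab\le(a+b)^2$; the arguments are otherwise identical.
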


\begin{proof}
\label{appendix:proof_k_rank_approx}

Since $A$ is PSD, it admits an eigenvalue decomposition $A = U \Lambda U^\top$ with non-negative eigenvalues $\lambda_1 \geq \lambda_2 \geq \cdots \geq \lambda_d \geq 0$. The best rank-$k$ approximation $A_k$ is obtained by keeping the top $k$ eigenvalues. Therefore,
\begin{equation*}
\|A - A_k\|_F^2 = \sum_{i = k+1}^d \lambda_i^2.
\end{equation*}
Applying the inequality $\lambda_i \leq \lambda_{k+1}$ for all $i > k$ and using Cauchy–Schwarz, we obtain
\begin{equation*}
\sum_{i = k+1}^d \lambda_i^2 \leq \lambda_{k+1} \sum_{i=k+1}^d \lambda_i \leq \frac{1}{k} \left( \sum_{i=1}^d \lambda_i \right)^2 = \frac{1}{k} \trace(A)^2.
\end{equation*}
Taking the square root gives the desired bound.
\end{proof}

\section{High Dimensional Proofs}
\label{appendix:hdp}
\subsection{Proof of Theorem~\ref{theorem:twinest_bound}}
\label{appendix:proof_twinest_bound}

For simplicity of notation, let $B := AA^{\top}$ and $\bar{B}$ denote the matrix $B$ with its diagonal entries set to zero. Let $D := D^m(B)$ be the diagonal estimate of $B$.

Recall that, as discussed in Section~\ref{sec:main_algo}, the goal of the algorithm is to find an index corresponding to a row of maximal $\ell_2$-norm. The key observation is that for any $\gamma \in \arg\max_i B_{ii}$ (there might be multiple rows with maximal norm), we need to show that its estimate $D_\gamma$ dominates all other estimates $D_j$ for $j \in S$, where $S := \{i \mid i \notin \arg\max_i B_{ii}\}$ be the set of non-maximal rows.

By \Cref{lemma:concentration_inf_norm}, with probability at least $1 - \delta$:
\begin{equation*}
    \| D - \diag(B) \|_{\infty} \leq \varepsilon, \quad \text{ where } \quad \varepsilon = \sqrt{\frac{2 \log (2d / \delta)}{m}} \|\bar{B}\|_{2 \to \infty}.
\end{equation*}

This bound implies that for each $i$, $D_i \in [B_{ii} - \varepsilon, B_{ii} + \varepsilon]$ . Moreover, by definition of $\Delta$ for any $j$, $B_{\gamma\gamma} \geq B_{jj} + \Delta$. Combining these facts, we conclude that for any $\gamma \in \arg\max_i B_{ii}$,
\begin{equation*}
    D_\gamma - D_j 
    \geq (B_{\gamma\gamma} - \varepsilon) - (B_{jj} + \varepsilon) = \underbrace{(B_{\gamma\gamma} - B_{jj})}_{\geq \Delta} - 2\varepsilon
    > \Delta - 2\left(\Delta/2\right) = 0,
\end{equation*}
where the last inequality holds when $\varepsilon < \Delta/2$.

This shows that for any maximal row $\gamma$ and any non-maximal row $j$, $D_\gamma > D_j$ with probability at least $1 - \delta$. Therefore, the algorithm correctly identifies a maximal row, meaning that  $T^m(A) = \|A\|_{2 \to \infty}$.

Finally, the condition $\varepsilon < \Delta/2$ is equivalent to
\begin{equation*}
    m > \frac{8\log (2d / \delta)}{\Delta^2} \|\bar{B}\|_{2 \to \infty}^2
    =
    \frac{8\log (2d / \delta)}{\Delta^2} \|AA^{\top} - \diag(AA^{\top})\|_{2 \to \infty}^2
    .
\end{equation*}

\subsection{Proof of Theorem~\ref{theorem:twinest_pp_bound}}
\label{appendix:proof_twinest_pp_bound}

Define $B := AA^\top$. Let $k \in \mathbb{N}$, $l = c_1 \cdot (k + \log(1/\delta))$, where $c_1$ -- sufficiently large universal constant. Let $S \in \mathbb{R}^{d \times l}$ be a random Rademacher matrix, and let $Q$ be an orthonormal basis for the range of $BS$. We decompose $B$ as
\begin{equation*}
B = BQQ^\top + B(I - QQ^\top),
\end{equation*}
where $BQQ^\top$ can be computed exactly using $l$ matrix-vector products with $B$, and the challenge is to estimate $\operatorname{diag}(B(I - QQ^\top))$.

Define $\hat{D} := D^k(B(I - QQ^\top))$. Then we have with probability at least $1 - \delta$:

\begin{align*}
    \| \hat{D} - \diag(B(I - QQ^\top)) \|_{\infty}
    &\leq \| \hat{D} - \diag(B(I - QQ^\top)) \|_{2}\\
    & \leq c_2 \sqrt{\frac{\log(2/\delta)}{k}} \|B(I - QQ^T) \|_F \tag{By Theorem~\ref{theorem:tight_diagonal_bound}}\\
    & \leq 2c_2 \sqrt{\frac{\log(2/\delta)}{k}} \|B - B_k \|_F \tag{By Corollary 7 and Claim 1 from~\cite{musco2020projection}}\\
    & \leq 2c_2 \sqrt{\frac{\log(2/\delta)}{k^2}} \trace(B) \tag{By Lemma~\ref{lemma:k_rank_approx}}\\
    & = 2c_2 \sqrt{\frac{\log(2/\delta)}{k^2}} \|A\|_F^2 \tag{since $B = AA^\top$}\\
\end{align*}

Setting
$k > 4c_2 \frac{\sqrt{ \log(2/\delta) }}{ \Delta } \|A\|_F^2 $
ensures that
\begin{equation*}
\| \hat{D} - \operatorname{diag}(B(I - QQ^\top)) \|_{\infty} < \Delta / 2.
\end{equation*}
Finally, following the same reasoning as in the proof of Theorem~\ref{theorem:twinest_bound}, we conclude that  $T^m_{++}(A) = \| A \|_{2 \to \infty}$ with probability at least $1 - \delta$, when
\begin{equation*}
m = 2l + k > c \cdot \left( \frac{\sqrt{ \log(2/\delta) }}{ \Delta } \|A\|_F^2 + \log(1/\delta) \right),
\end{equation*}
for some sufficiently large constant $c$.

\section{Adaptive Power Method}
\label{appendix:adaptive_power_method}

\begin{algorithm}[H]
\caption{Adaptive Power Method for Two-to-Infinity Norm from \cite{higham1992estimating,roth2020adversarial}}
\label{algo:adaptive_power_method}
\begin{algorithmic}%
  \REQUIRE{
    \emph{}\\
    Oracle for matrix-vector multiplication with matrix \( A \in \mathbb{R}^{d \times n} \), \\
    Oracle for matrix-vector multiplication with matrix \( A^T \in \mathbb{R}^{n \times d} \), \\
    Positive integer \( m \in \mathbb{N} \): number of iterations.
  }
  \ENSURE{
    \emph{}\\
    An estimate of the $\|A\|_{2 \to \infty}$ norm.
  }
\end{algorithmic}%
\begin{algorithmic}[1]
\STATE Sample random vector \( X^0 \in \mathbb{R}^n \) from \( \mathcal{N}(0, I_n) \)
\FOR{each \( i = 1, 2, \dots, m \)}
    \STATE Compute \( Y^i = \operatorname{dual}_{\infty}(AX^{i - 1}) \)
    \STATE Compute \( X^i = \operatorname{dual}_2(A^\top Y^i) \)
\ENDFOR
\STATE Compute \( L = \| A X^m \|_{\infty} \)
\RETURN \( L \)
\end{algorithmic}
\end{algorithm}

\section{Details for Experiments}
\label{appendix:details_exps}

\subsection{Setting the Random Seed}
\label{appendix:set_random}

\lstset{style=mystyle}
\begin{lstlisting}[language=Python, caption=Python code used to fix the random seed.]
import os
import torch
import random
import numpy as np

seed = 42  # Random seed
torch.manual_seed(seed)
torch.backends.cudnn.deterministic = True
torch.backends.cudnn.benchmark = False
np.random.seed(seed)
random.seed(seed)
os.environ["PYTHONHASHSEED"] = str(seed)
\end{lstlisting}

\subsection{Singular Values of Evaluation Matrices}
\label{appendis:sv}

\Cref{fig:exp_sv_delta} shows the singular values of the synthetic matrices for different values of $\Delta$. Due to our matrix-generation scheme, the singular-value distributions are quite similar, whereas the real-world matrix in \Cref{fig:exp_sv_jac} exhibits a different distribution.

\begin{figure}[H]
\centering
    \begin{subfigure}{0.45\textwidth}
     \centering\includegraphics[width=\textwidth]{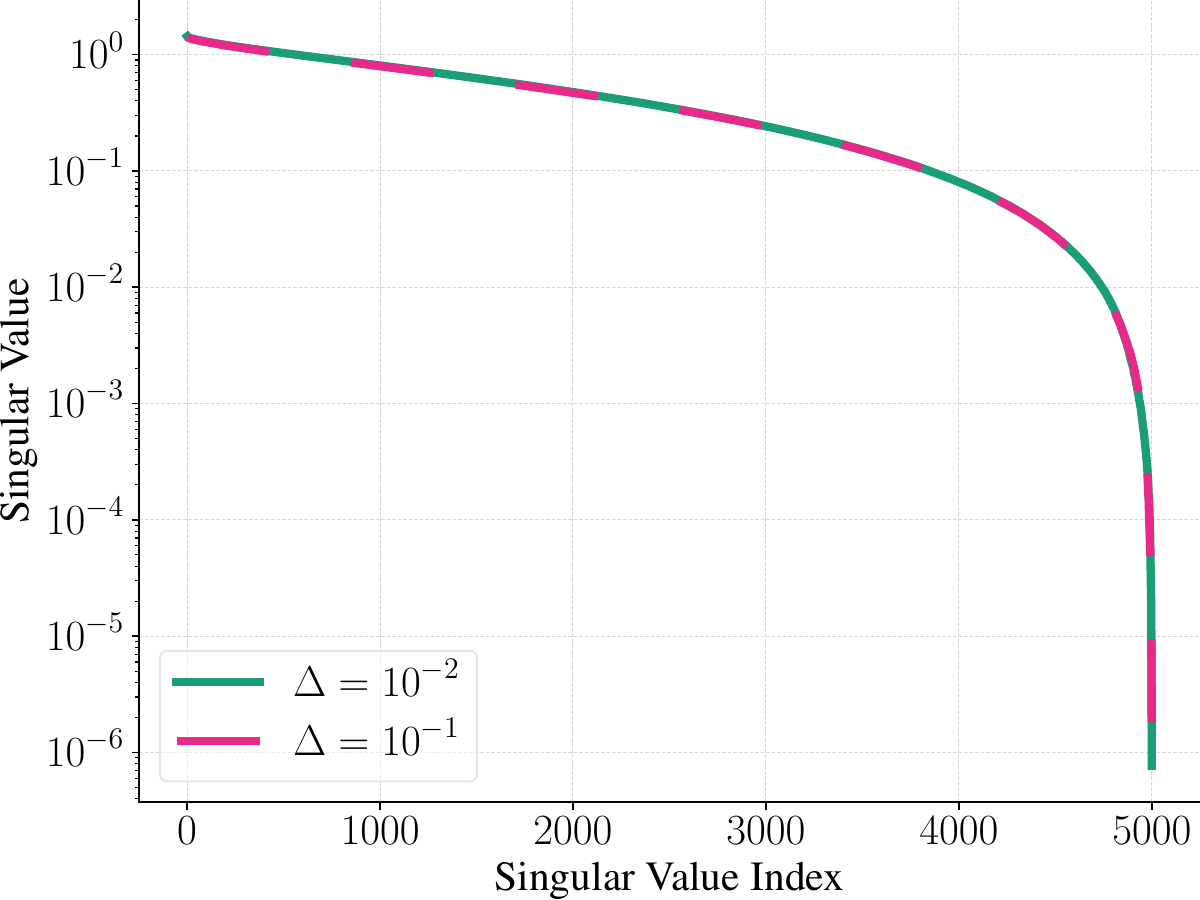}
    \subcaption{Singular values of synthetic matrices for different $\Delta$.}
    \label{fig:exp_sv_delta}
   \end{subfigure}\hfill
   \begin{subfigure}{0.45\textwidth}
     \centering\includegraphics[width=\textwidth]{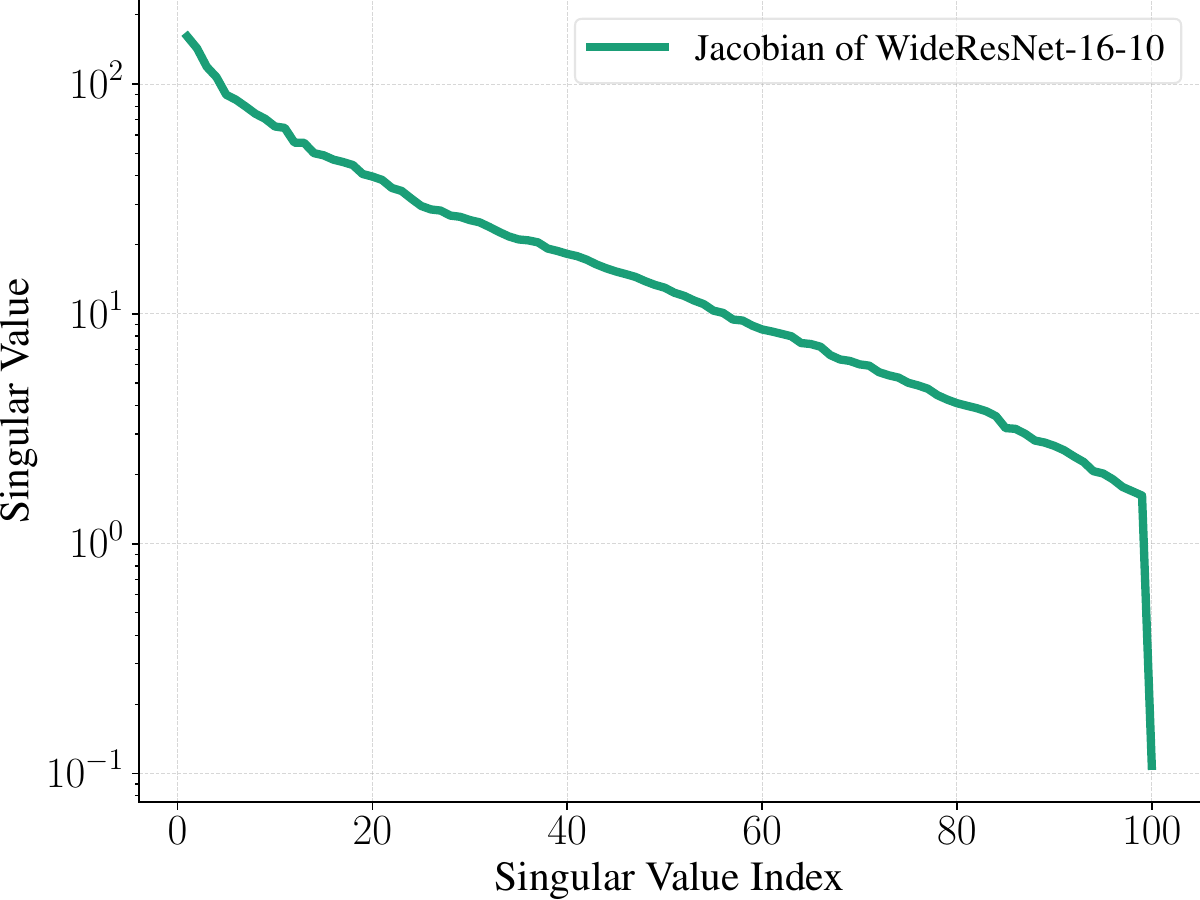}
    \subcaption{Singular values of the Jacobian matrix of WideResNet-16-10.}
    \label{fig:exp_sv_jac}
   \end{subfigure}
   \caption{Singular values of synthetic and real world matrices.}
   \label{fig:singular_values}
\end{figure}

\subsection{Ablation for Method Comparison}
\label{appendix:figures}

\Cref{fig:abl_r} presents an ablation study of different strategies for choosing $r$ (see \Cref{sec:improved_algo}). The base TwINEst++ algorithm uses a partition with $p = 1/3$ ($r = p \cdot m = m / 3$, where $m$ is the whole sampling budget). When $p \to 0$, TwINEst++ behaves like TwINEst, as expected, since the low-rank term has a negligible effect. When the low-rank term can capture the dominant eigenspace, as in \Cref{fig:abl_r:3}, it is better to use a greater $p$, as fewer terms will need to be approximated via the stochastic algorithm. Thus, $p$ represents a trade-off between not deviating too far from TwINEst in unfavorable settings and preserving most of the dominant eigenspace in the low-rank case.

\begin{figure}[H]
    \begin{subfigure}{0.3\textwidth}
     \centering\includegraphics[width=\textwidth]{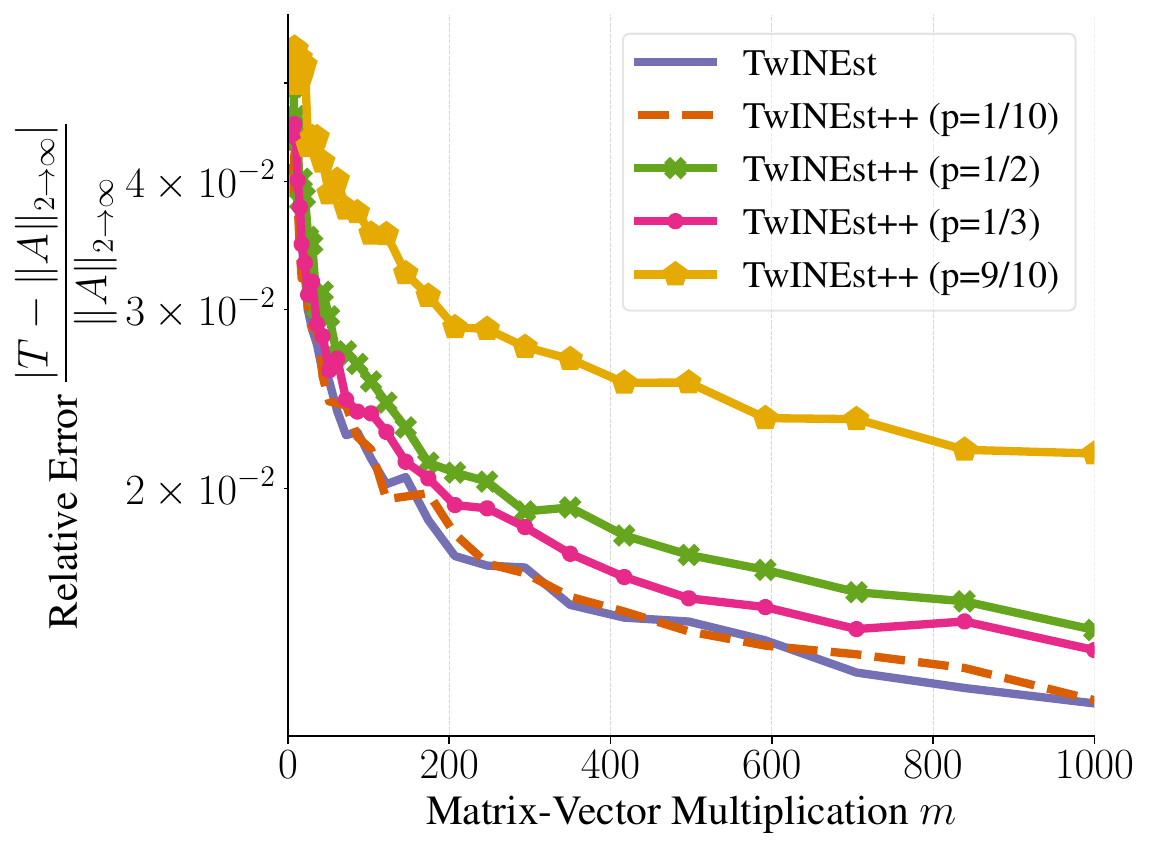}
    \subcaption{Synthetic data with $\Delta=10^{-2}$.}
   \end{subfigure}\hfill
   \begin{subfigure}{0.3\textwidth}
     \centering\includegraphics[width=\textwidth]{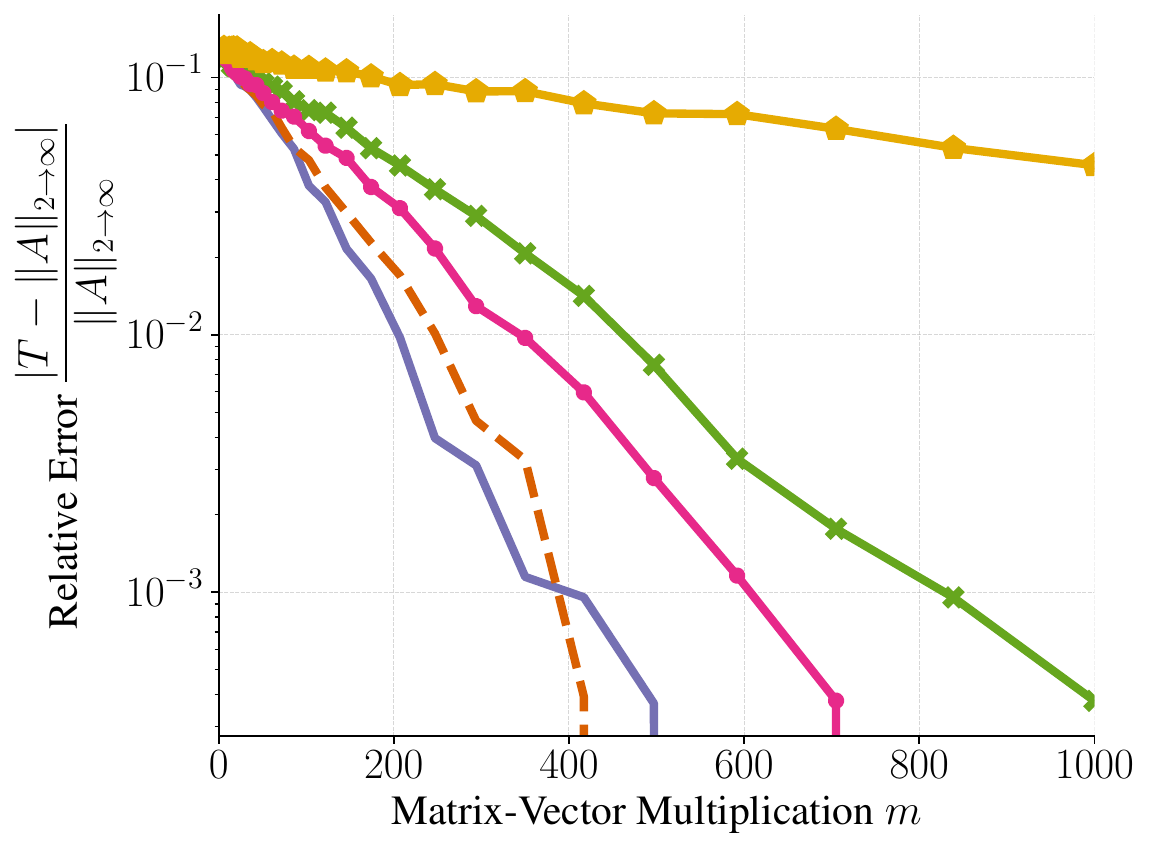}
    \subcaption{Synthetic data with $\Delta=10^{-1}$.}
   \end{subfigure}\hfill
   \begin{subfigure}{0.3\textwidth}
     \centering\includegraphics[width=\textwidth]{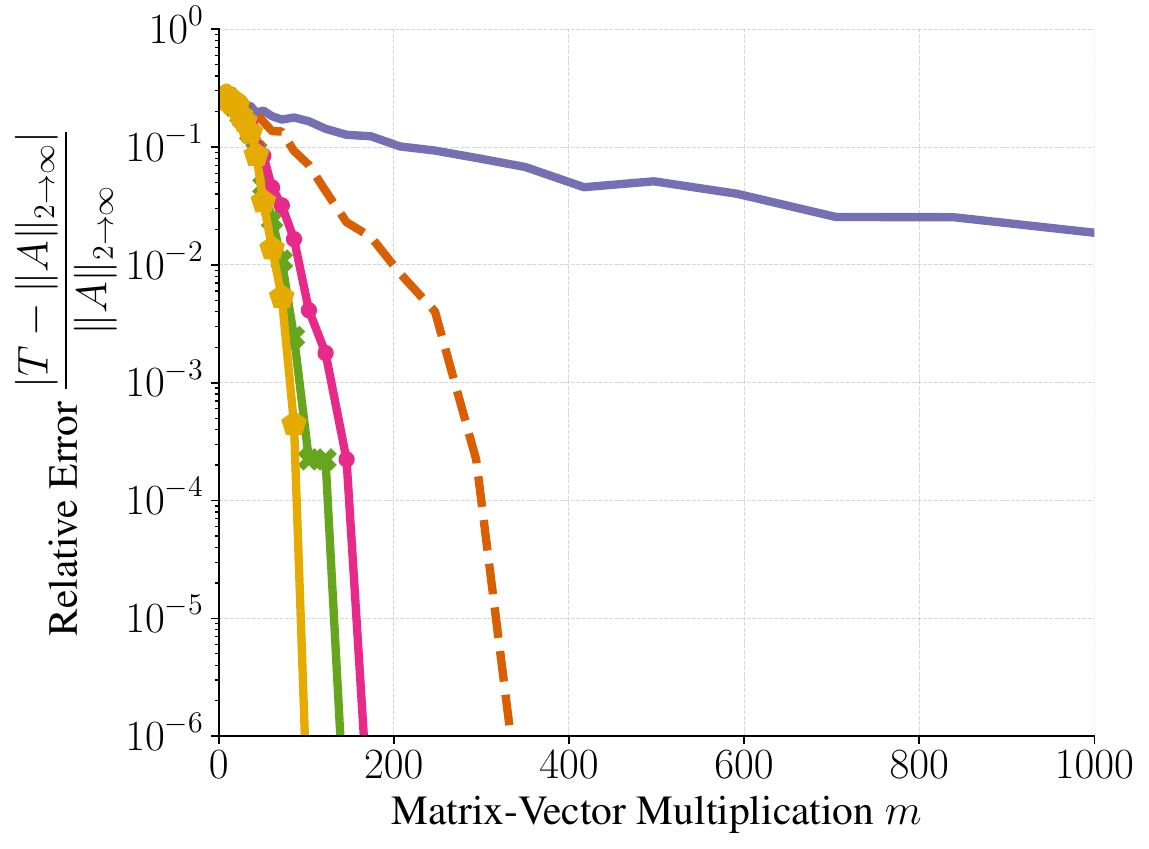}
    \subcaption{Jacobian matrix of WideResNet-16-10.}
    \label{fig:abl_r:3}
   \end{subfigure}
   \caption{Comparison of different strategies for choosing $r$ in TwINEst++ algorithm. The plot shows the relative error versus number of matrix-vector multiplications, averaged over 500 trials.}
   \label{fig:abl_r}
\end{figure}

\begin{figure}[H]
    \begin{subfigure}{0.3\textwidth}
     \centering\includegraphics[width=\textwidth]{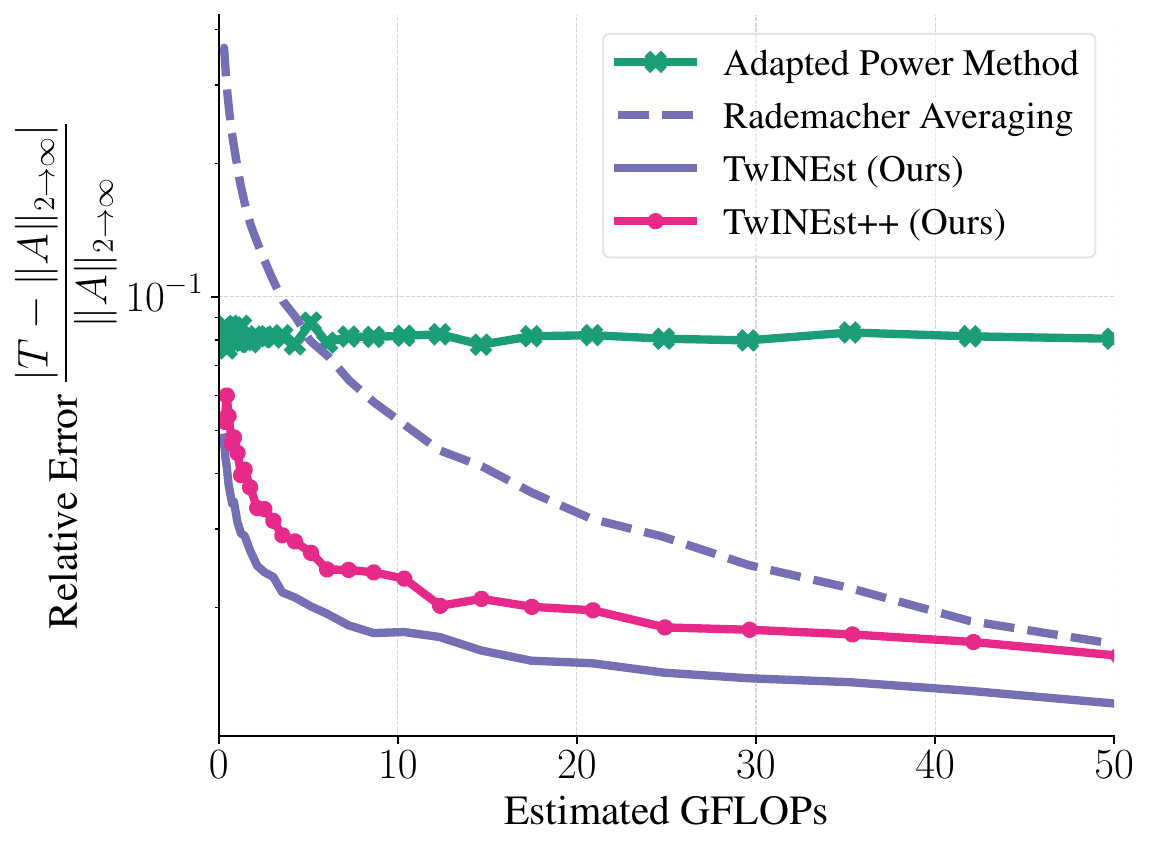}
    \subcaption{Synthetic data with $\Delta=10^{-2}$.}
   \end{subfigure}\hfill
   \begin{subfigure}{0.3\textwidth}
     \centering\includegraphics[width=\textwidth]{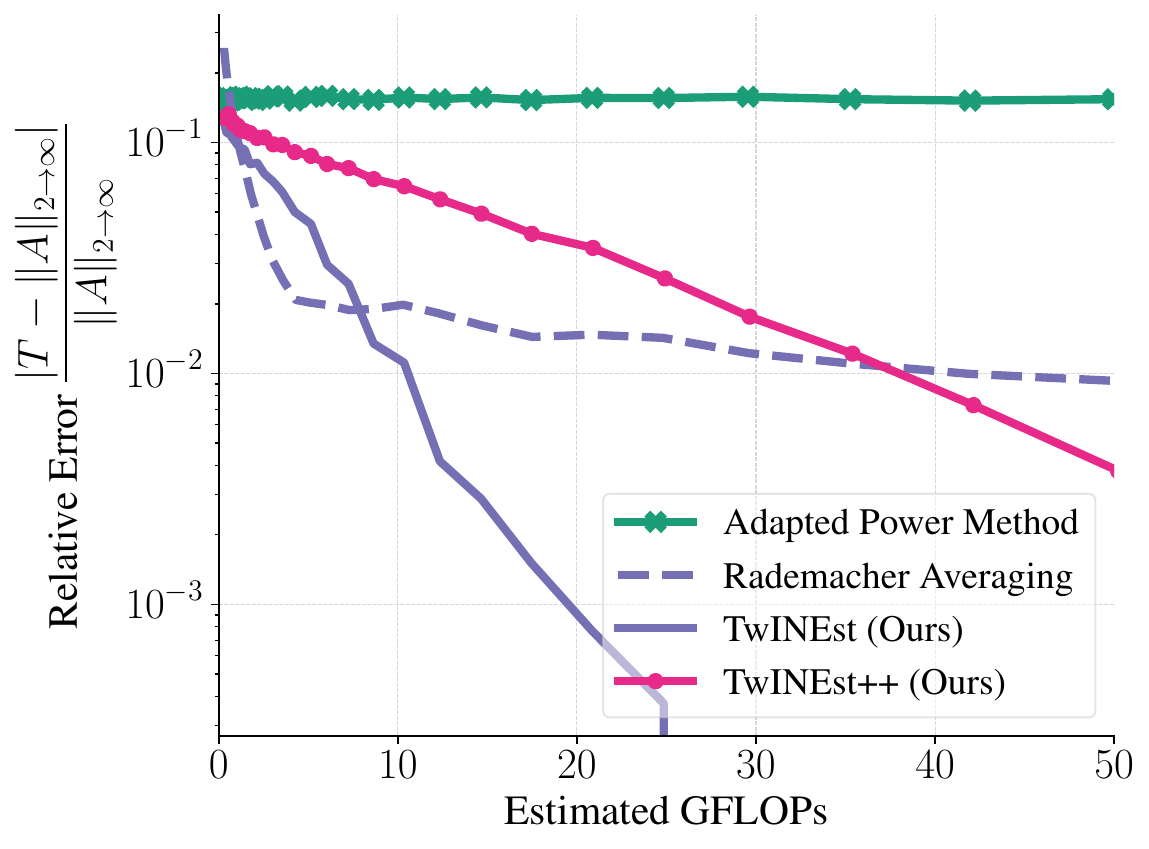}
    \subcaption{Synthetic data with $\Delta=10^{-1}$.}
   \end{subfigure}\hfill
   \begin{subfigure}{0.3\textwidth}
     \centering\includegraphics[width=\textwidth]{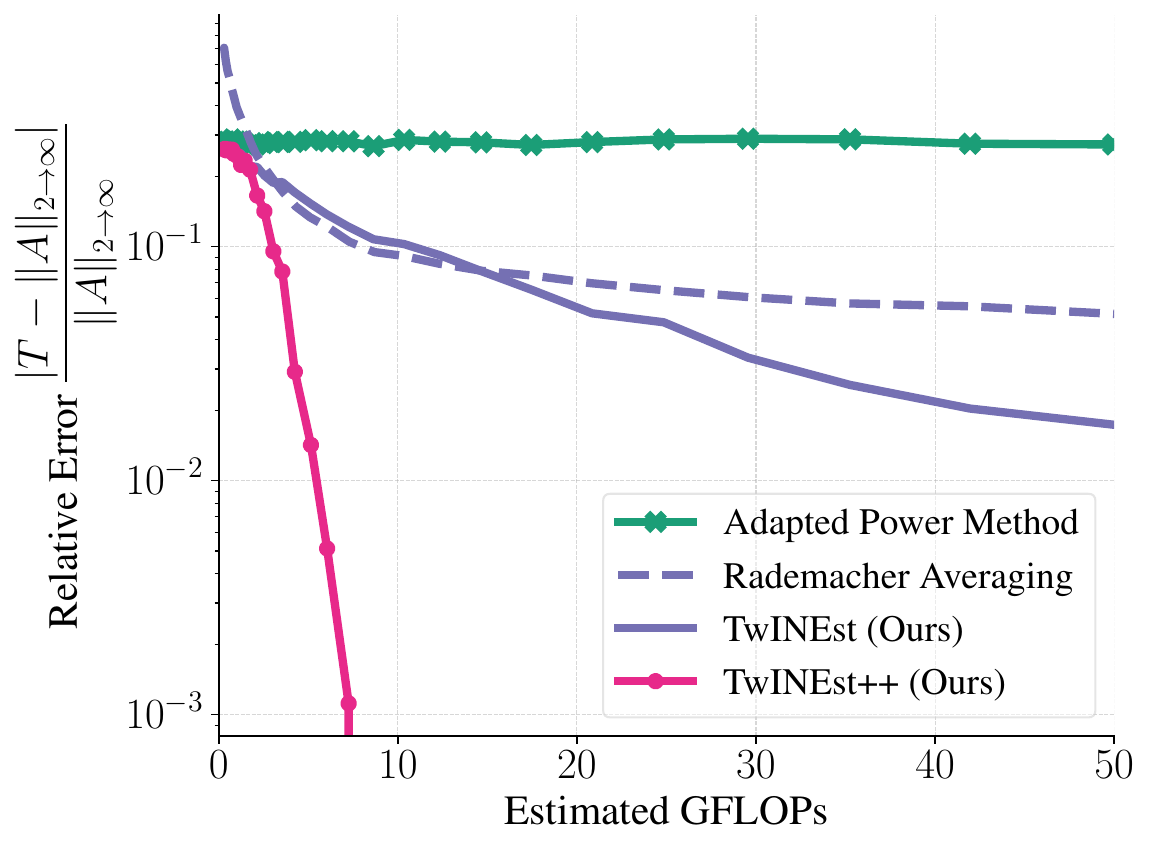}
    \subcaption{Jacobian matrix of WideResNet-16-10.}
   \end{subfigure}
   \caption{Comparison of methods for estimating the two-to-infinity matrix norm. The plot shows the relative error versus GFLOPs, averaged over 500 trials. For the Jacobian matrix, matrix-vector multiplications were computed using JVP and VJP via autograd, whereas for synthetic data, explicit matrix-vector multiplications were used.}
   \label{fig:flops}
\end{figure}

\subsection{Hyperparameters for Image Classification}
\label{appendix:hyperparams}

Each model is trained for 200 epochs using stochastic gradient descent (SGD) with Nesterov momentum of $0.9$ and weight decay of $5 \cdot 10^{-5}$. The initial learning rate is set to $0.1$, decayed by a factor of $0.1$ at epochs $60$, $120$, and $160$. We use a batch size of $128$ and apply the data augmentations listed in \Cref{tab:cifar_aug} and \Cref{tab:tiny_aug}. For both FGSM and PGD attacks, we use $\varepsilon = 2/255$.

\begin{table}[H]
\centering
\begin{tabular}{ll}
\toprule
\textbf{Transform} & \textbf{Parameters} \\
\midrule
RandomHorizontalFlip & --- \\
Pad & padding = 4,\quad padding\_mode = "symmetric" \\
RandomCrop & size = 32 \\
Normalize & mean = [0.5, 0.5, 0.5],\quad std = [0.5, 0.5, 0.5] \\
\bottomrule
\end{tabular}
\caption{Data augmentation used for CIFAR-100.}
\label{tab:cifar_aug}
\end{table}

\begin{table}[H]
\centering
\begin{tabular}{ll}
\toprule
\textbf{Transform} & \textbf{Parameters} \\
\midrule
RandomHorizontalFlip & --- \\
Pad & padding = 4,\quad padding\_mode = "symmetric" \\
RandomCrop & size = 64 \\
ColorJitter & brightness = 0.2,\quad contrast = 0.2,\quad \\
             & saturation = 0.2,\quad hue = 0.1 \\
Normalize & mean = [0.485, 0.456, 0.406],\quad std = [0.229, 0.224, 0.225] \\
\bottomrule
\end{tabular}
\caption{Data augmentation used for TinyImageNet.}
\label{tab:tiny_aug}
\end{table}

\subsection{Regularizers for Image Classification}
\label{appendix:reg_info}

For Frobenius-norm regularization, we use the algorithm introduced in \cite{hoffman2019robust}, relying on $1$ Jacobian–vector multiplication. For spectral and infinity norm regularization, we adopt the algorithm described in \cite{roth2020adversarial}, based on $3$ sequential Jacobian–vector multiplications. For two-to-infinity norm regularization, we employ the TwINEst algorithm, based on $5$ Jacobian–vector multiplications, $4$ of them executed in parallel.

\subsection{Regularizer Ablation}
\label{appendix:regularizer_ablation}

To demonstrate the practical applicability of our regularization, we show that updating the regularization term only once every $k$ iterations is sufficient to outperform other methods. \Cref{fig:twinest_ksteps} indicates that choosing $k = 50$ improves generalization ability by up to $1$ accuracy point while adding negligible wall-clock overhead compared with training without regularization.

\begin{figure}[H]
    \centering\includegraphics[width=0.45\textwidth]{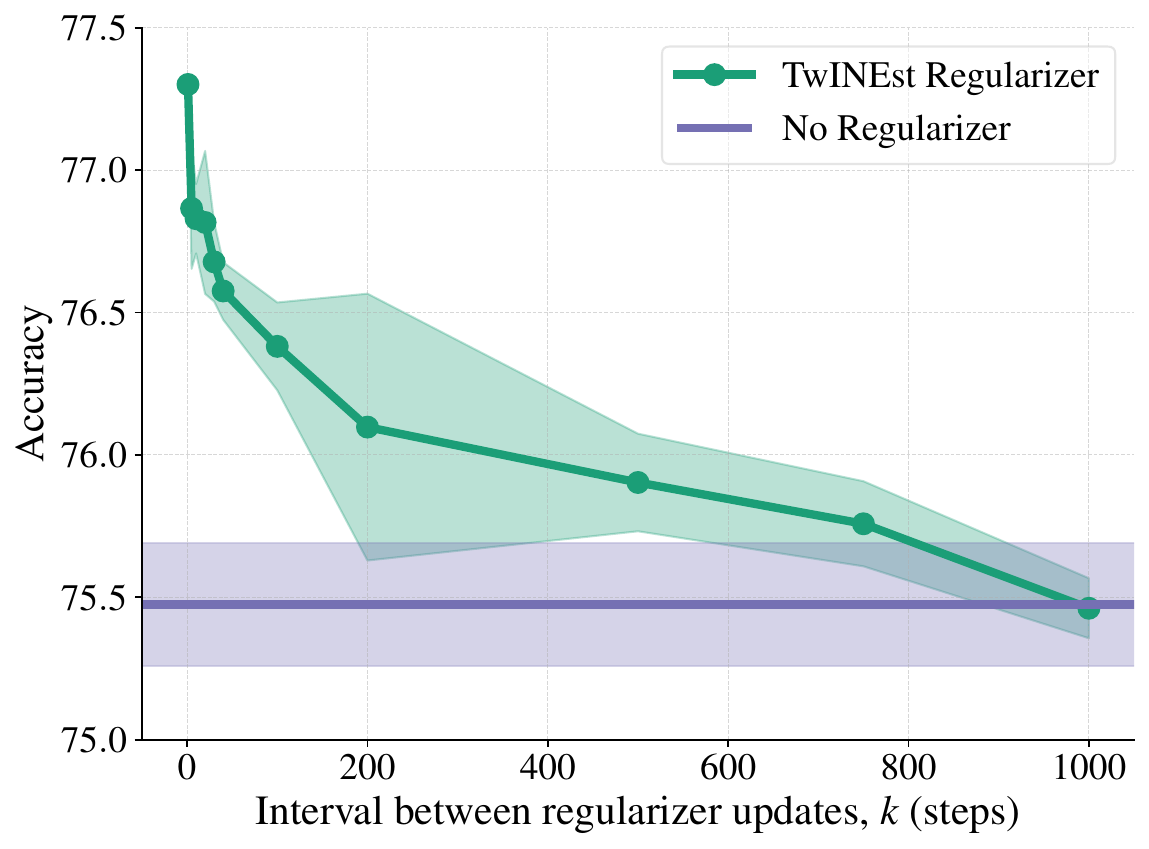}
    \caption{Comparison of different $k$ parameters in Jacobian two-to-infinity norm regularization method on the CIFAR-100 dataset using WideResNet-16-10. Accuracy metric is averaged over 3 trials.}
    \label{fig:twinest_ksteps}
\end{figure}

\subsection{Hyperparameters and Data Preprocessing for Recommender Systems}
\label{appendix:hyperparams_recsys}

\begin{table}[H]
\centering
\begin{tabular}{l|ccc|ccc|ccc}
\toprule
& \multicolumn{3}{c|}{\textbf{MovieLens-1M}}
& \multicolumn{3}{c|}{\textbf{Yelp-2018}}
  & \multicolumn{3}{c}{\textbf{CiteULike}} \\
\cmidrule{2-4} \cmidrule{5-7} \cmidrule{8-10}
\textbf{Hyperparameter} & W.D. & $2\to\infty$ & Max-norm& W.D. & $2\to\infty$ & Max-norm& W.D. & $2\to\infty$ & Max-norm\\
\midrule
Embedding dim & 64 & 64 & 64 & 64 & 128 & 128 & 128 & 128 & 128 \\
Learning rate & $10^{-3}$ & $10^{-2}$ & $10^{-4}$ & $10^{-3}$ & $10^{-3}$ & $10^{-3}$ & $10^{-4}$ & $10^{-4}$ & $10^{-3}$ \\
$\lambda$ & $0.0007$ & $0.03$ & $0.03$ & $0.004$ & $0.7$ & $0.0006$ & $0.09$ & $0.1$ & $0.02$ \\
$\gamma$ & $0.0001$ & $5\cdot 10^{-5}$ & $0.0002$ & $0.00005$ & $0.0005$ & $0.0009$ & $0.0002$ & $0.0001$ & $0.002$ \\
Negative num & 200 & 200 & 200 & 800 & 800 & 800 & 25 & 71 & 67 \\
Negative weight & 200 & 200 & 200 & 300 & 300 & 300 & 25 & 55 & 93 \\
\bottomrule
\end{tabular}
\caption{Hyperparameters for UltraGCN model.}
\label{tab:hyperparam_recsys}
\end{table}

For MovieLens-1M and Yelp2018 datasets we use train-test split from \cite{mao2021ultragcn}. For CiteULike dataset we filter users that have more than $2$ interactions and take $10\%$ random items from every user to test set.

\subsection{Evaluation Metrics and Additional Results for RecSys Application}
\label{appendix:add_metrics_recsys}

\begin{figure}[ht!]
    \begin{subfigure}{0.3\textwidth}
     \centering\includegraphics[width=\textwidth]{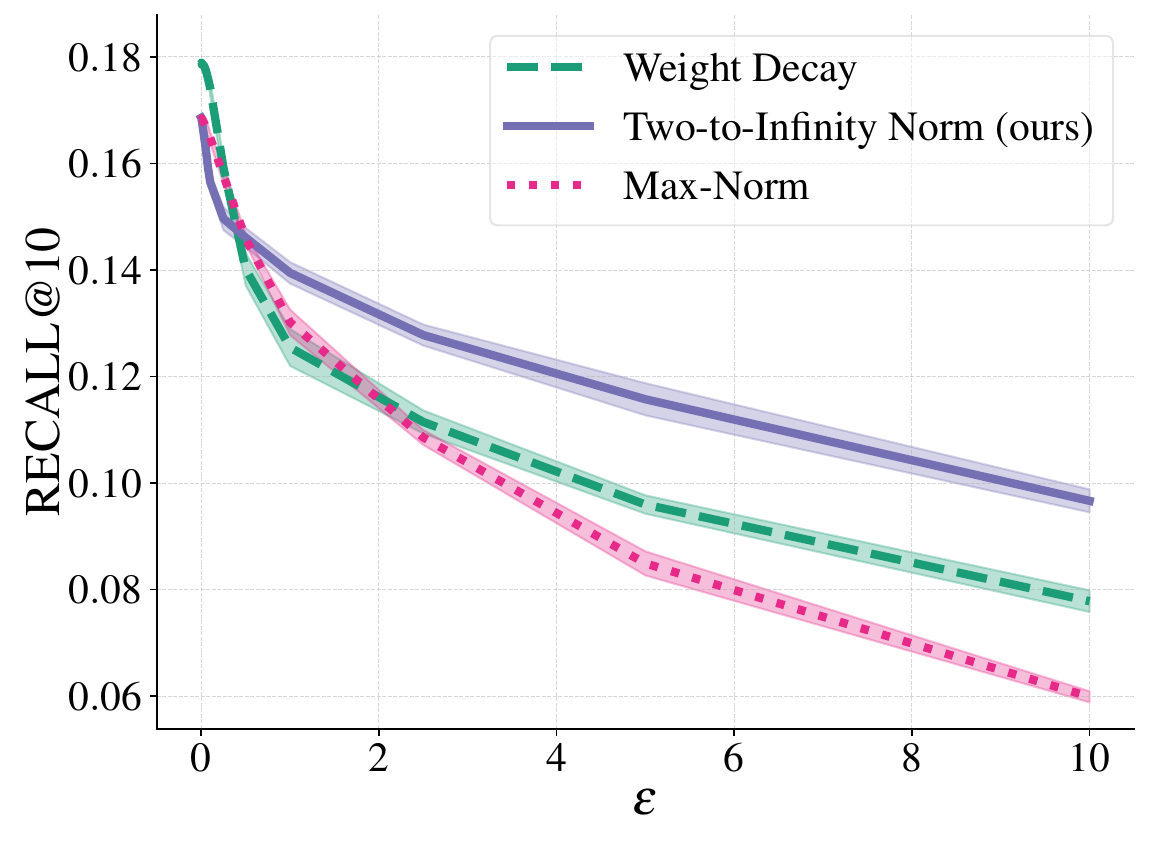}
   \end{subfigure}\hfill
   \begin{subfigure}{0.3\textwidth}
     \centering\includegraphics[width=\textwidth]{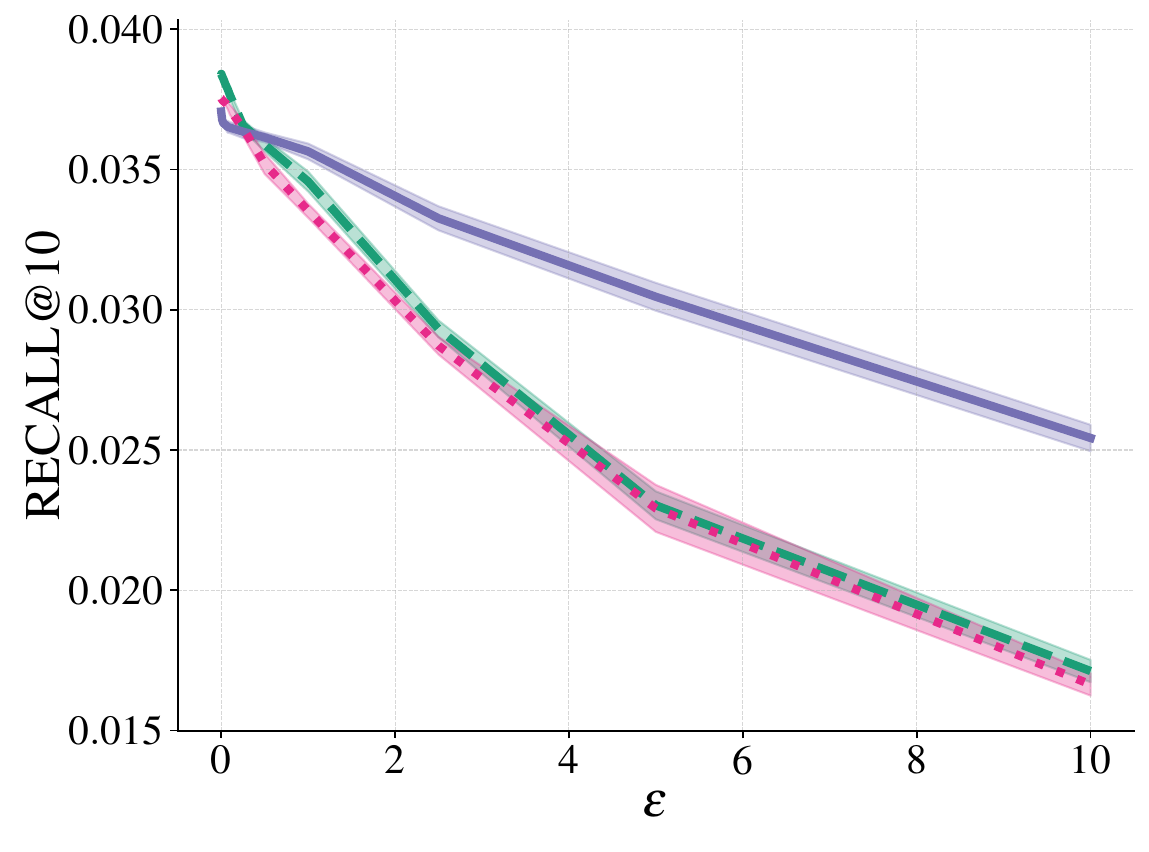}
   \end{subfigure}\hfill
   \begin{subfigure}{0.3\textwidth}
     \centering\includegraphics[width=\textwidth]{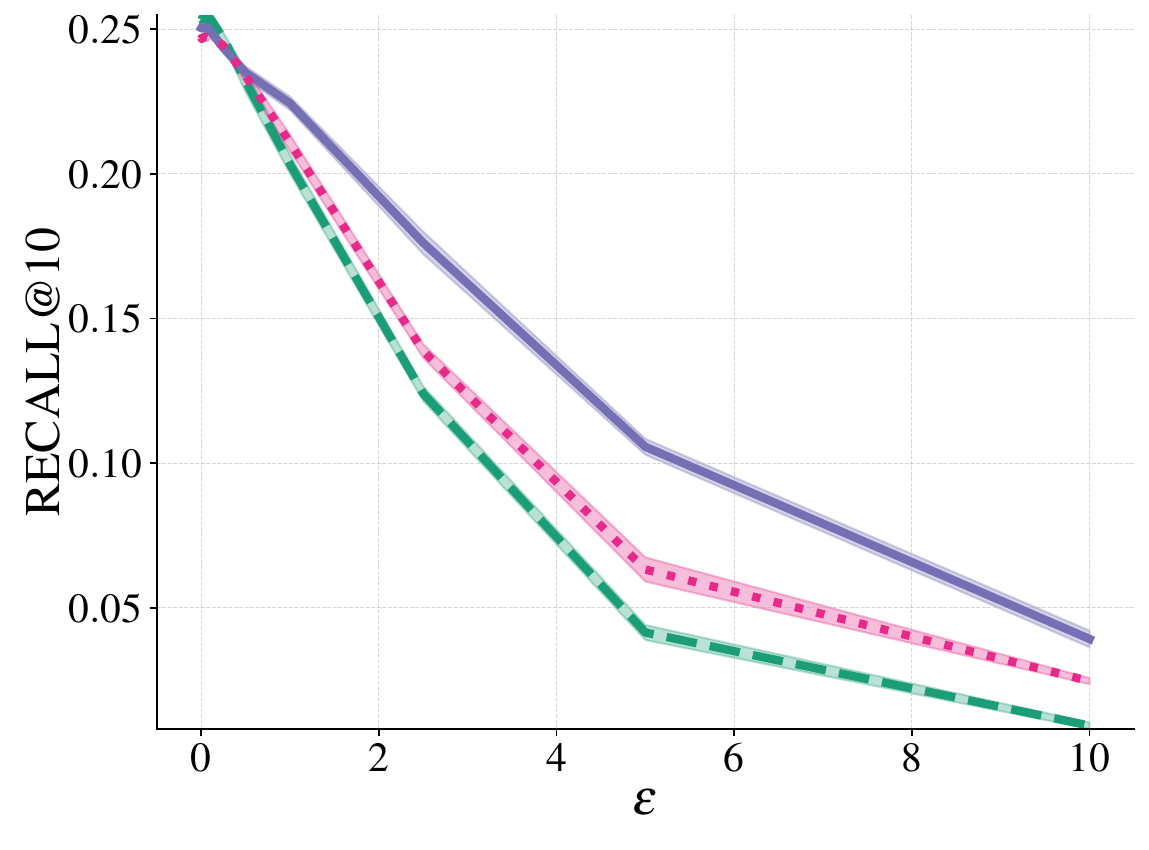}
   \end{subfigure}\vfill
   \begin{subfigure}{0.3\textwidth}
     \centering\includegraphics[width=\textwidth]{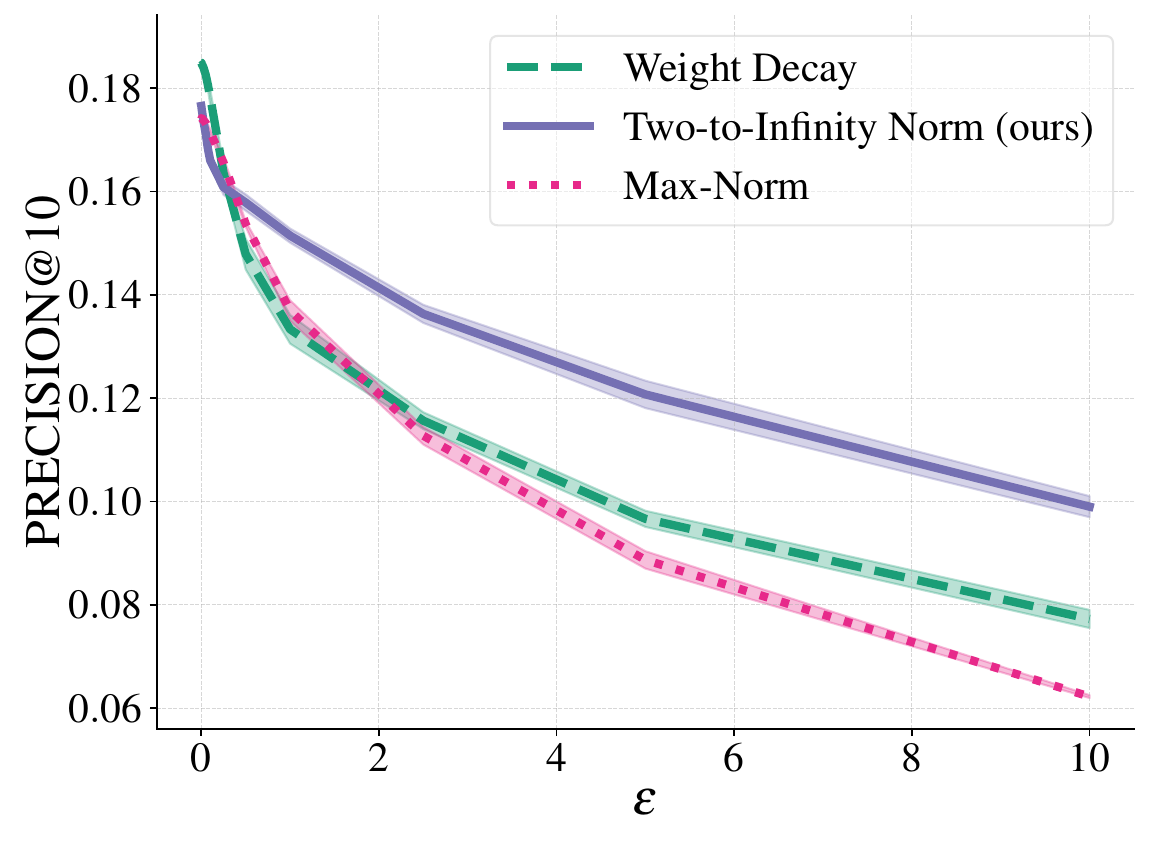}
   \end{subfigure}\hfill
   \begin{subfigure}{0.3\textwidth}
     \centering\includegraphics[width=\textwidth]{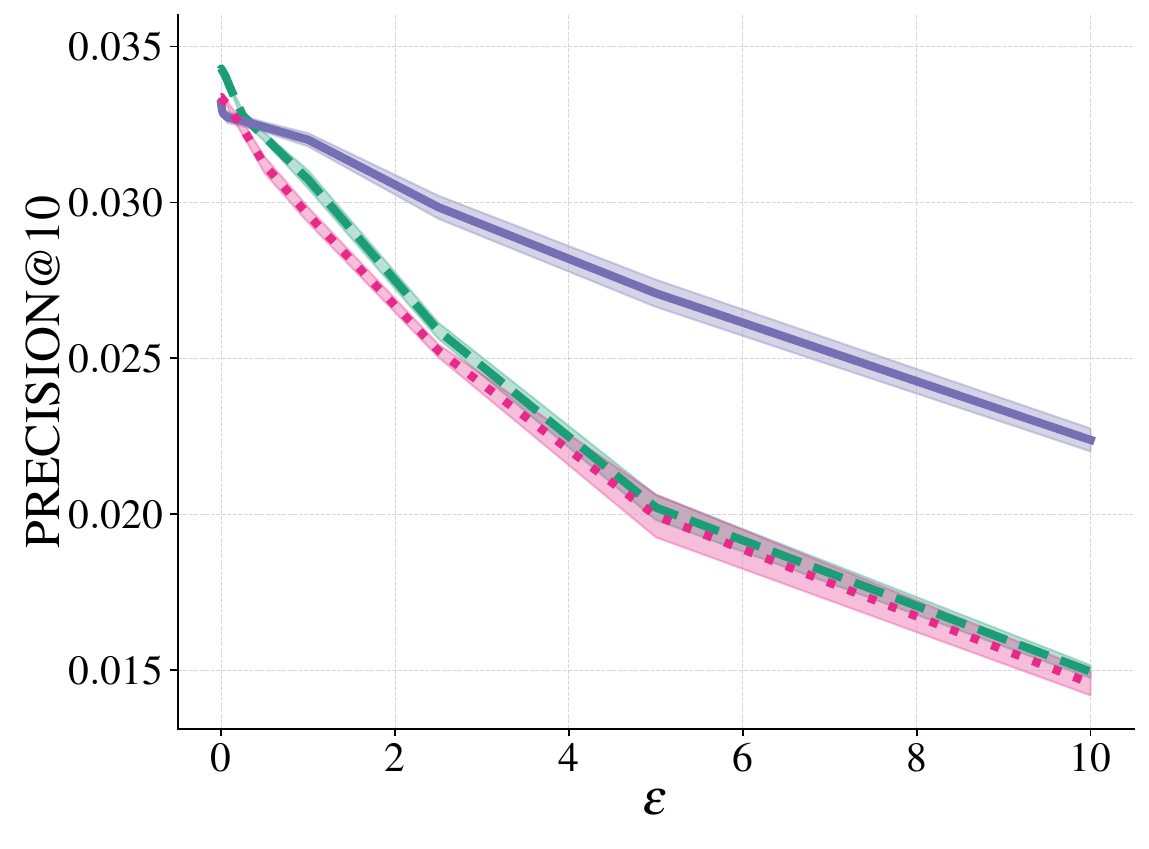}
   \end{subfigure}\hfill
   \begin{subfigure}{0.3\textwidth}
     \centering\includegraphics[width=\textwidth]{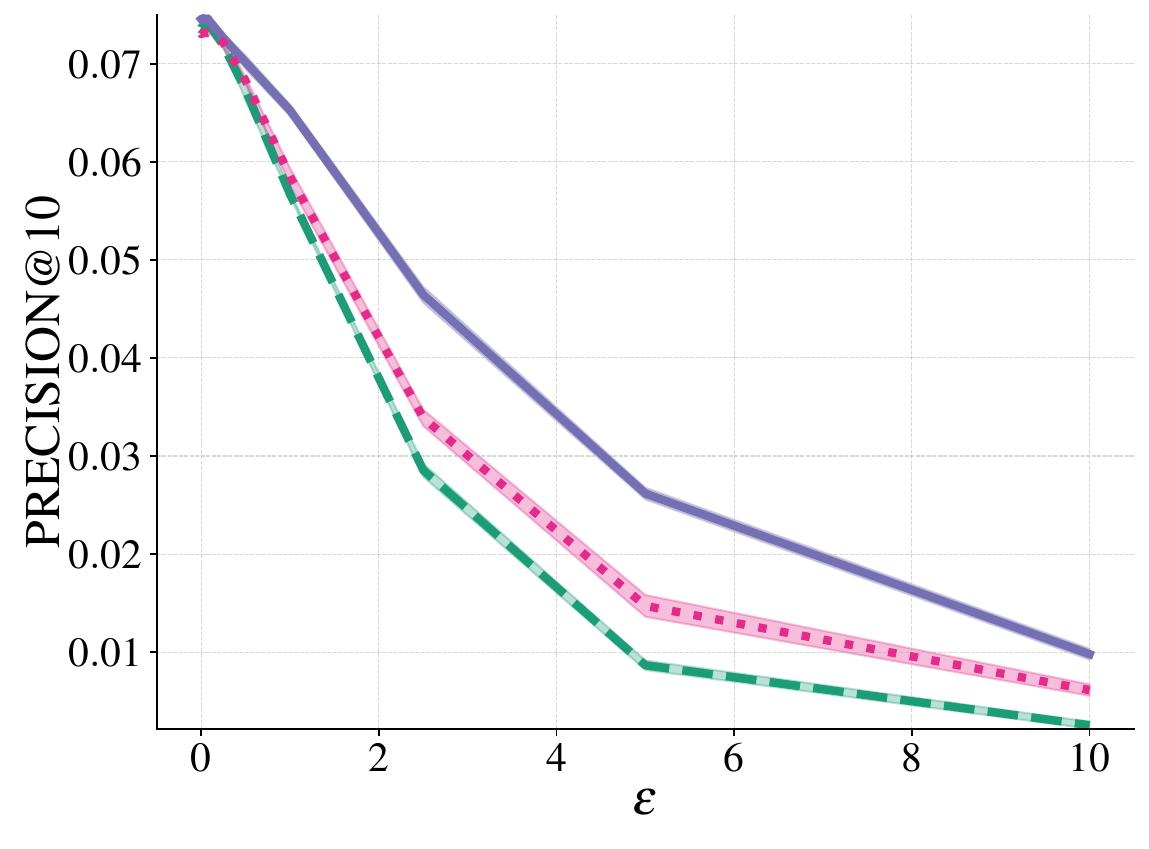}
   \end{subfigure}\vfill
    \begin{subfigure}{0.3\textwidth}
     \centering\includegraphics[width=\textwidth]{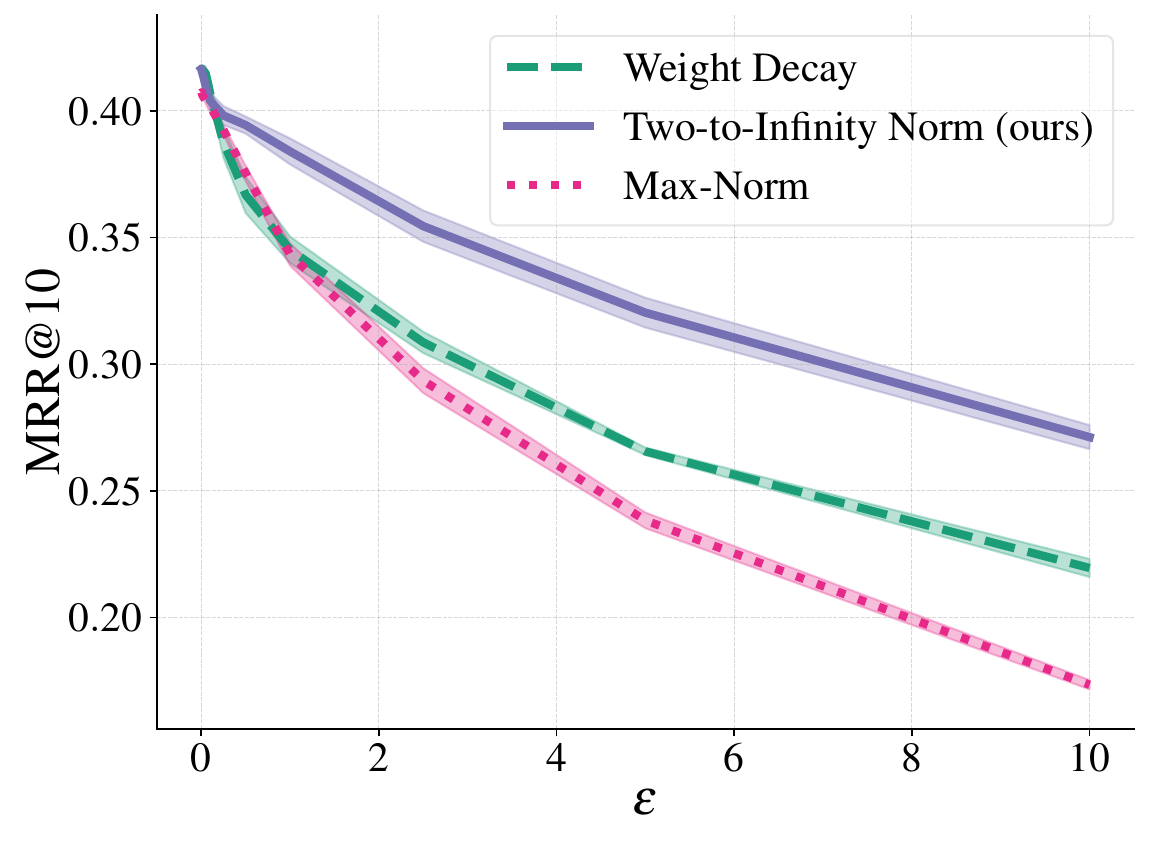}
   \end{subfigure}\hfill
   \begin{subfigure}{0.3\textwidth}
     \centering\includegraphics[width=\textwidth]{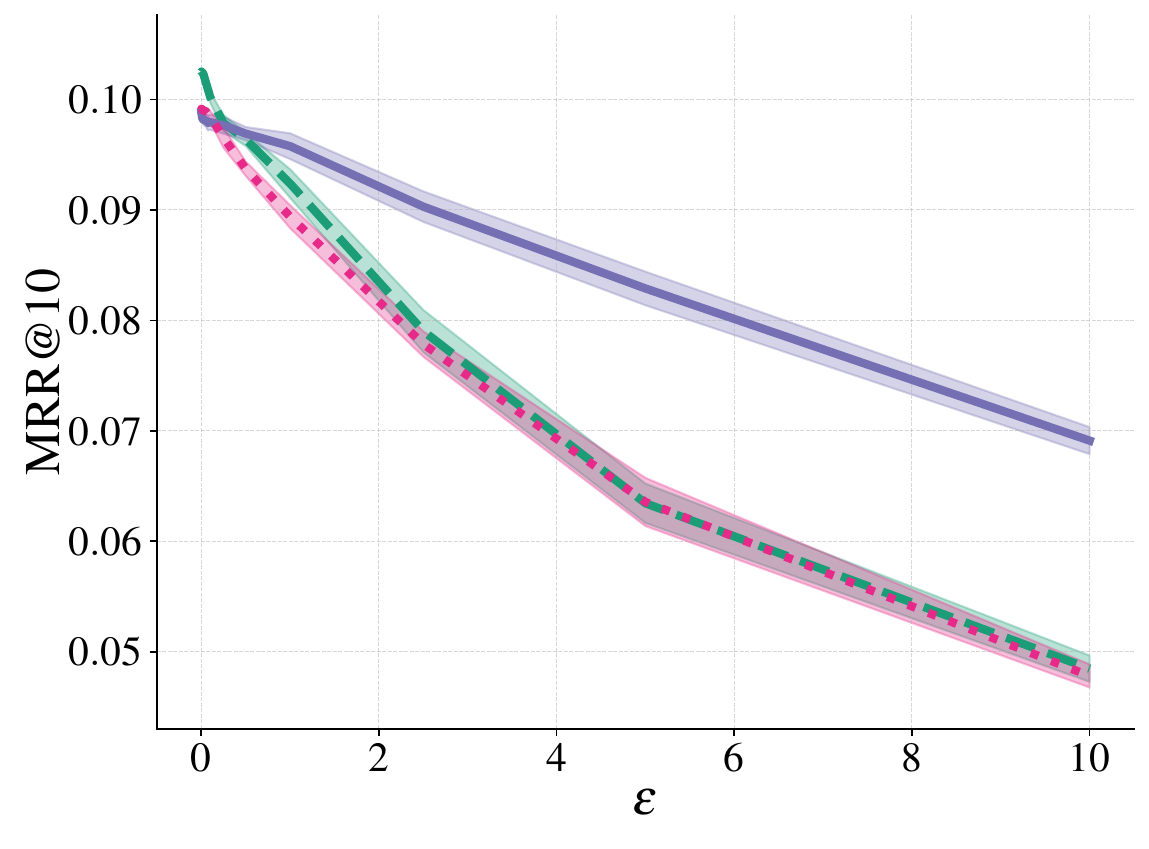}
   \end{subfigure}\hfill
   \begin{subfigure}{0.3\textwidth}
     \centering\includegraphics[width=\textwidth]{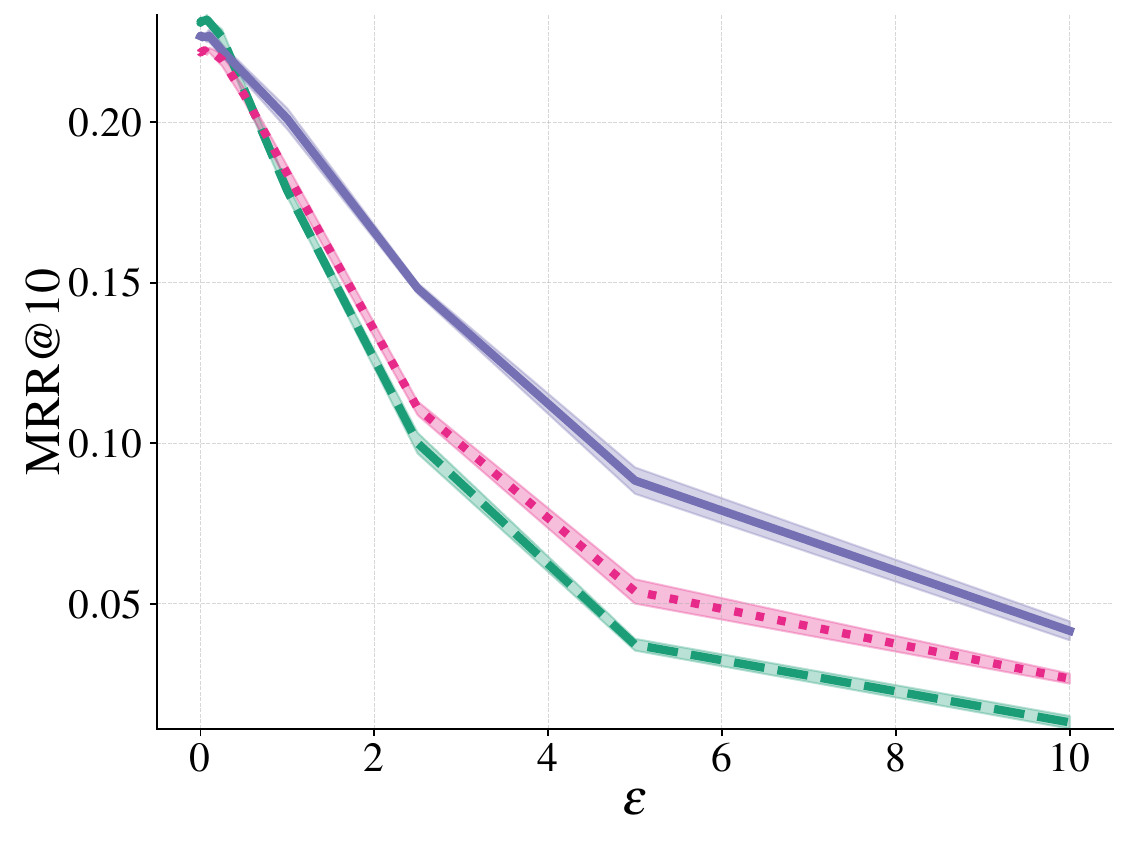}
   \end{subfigure}\vfill
    \begin{subfigure}{0.3\textwidth}
     \centering\includegraphics[width=\textwidth]{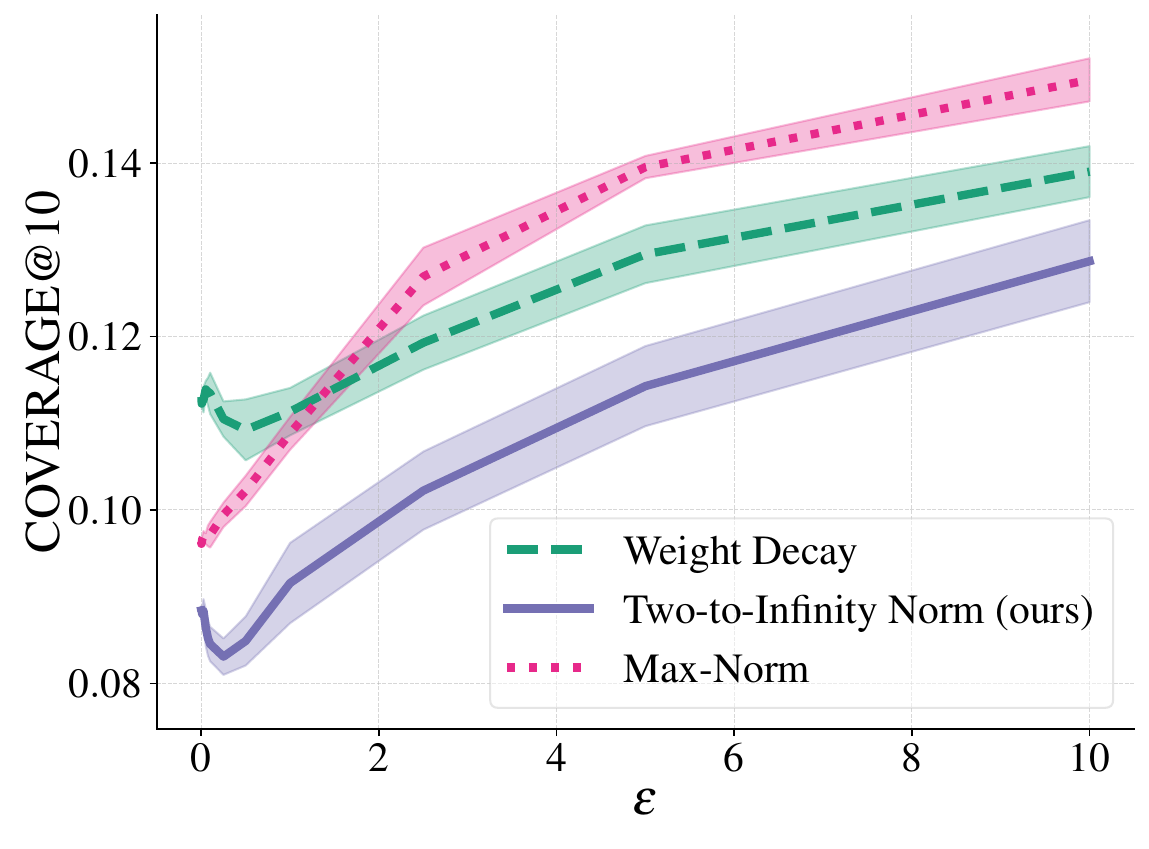}
    \subcaption{MovieLens-1M dataset.}
   \end{subfigure}\hfill
   \begin{subfigure}{0.3\textwidth}
     \centering\includegraphics[width=\textwidth]{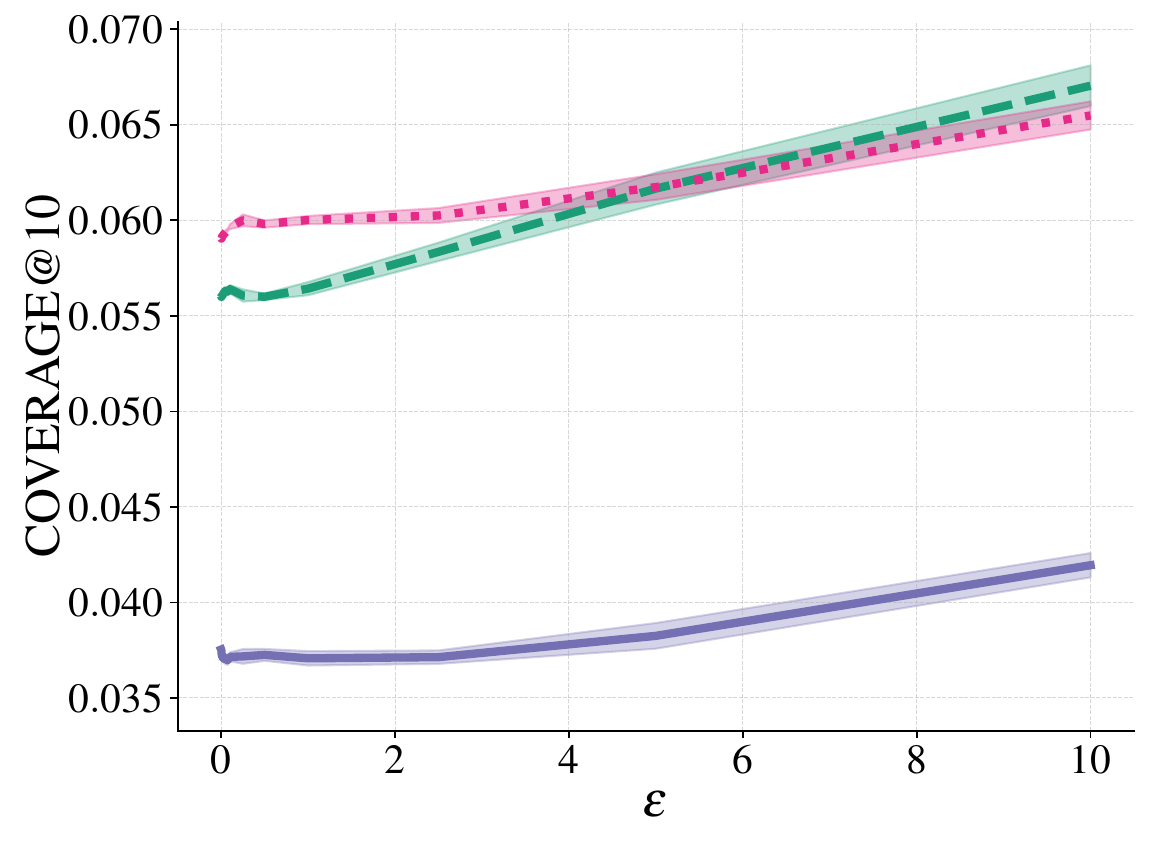}
    \subcaption{Yelp2018 dataset.}
   \end{subfigure}\hfill
   \begin{subfigure}{0.3\textwidth}
     \centering\includegraphics[width=\textwidth]{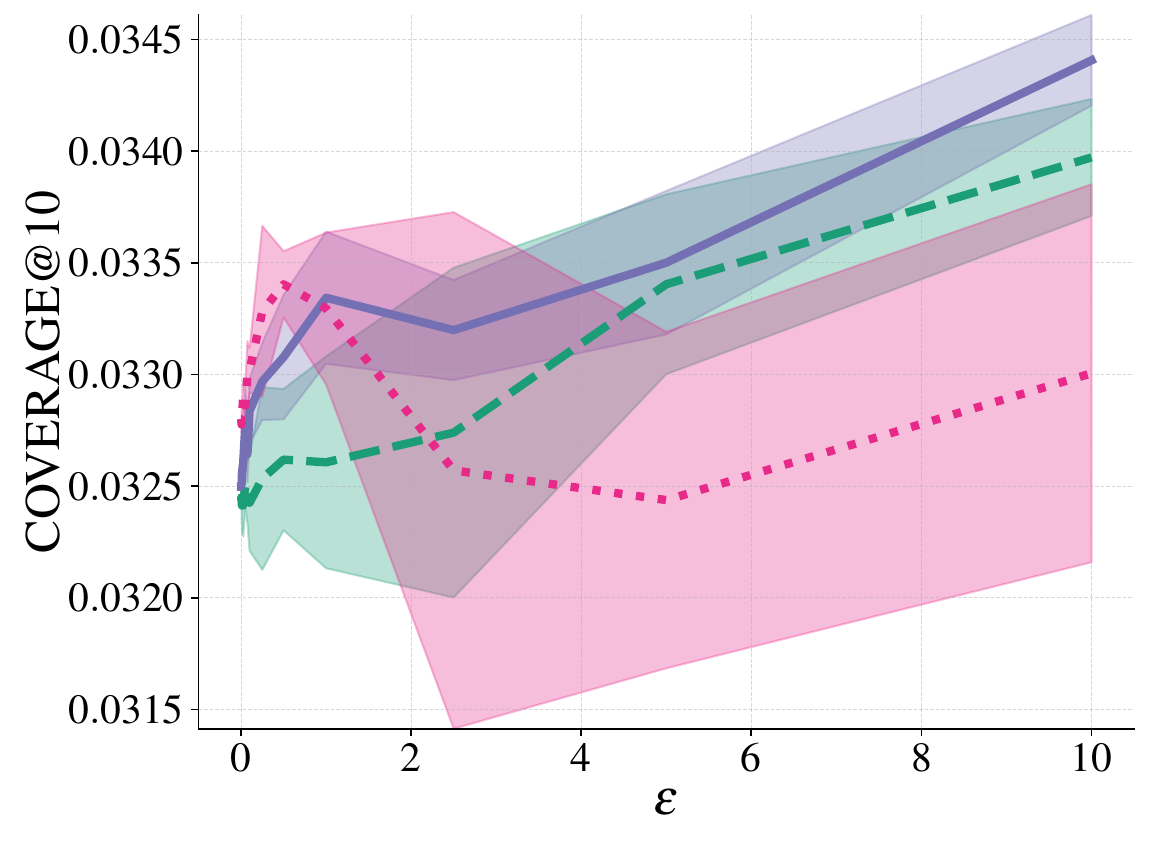}
    \subcaption{CiteULike dataset.}
   \end{subfigure}
   \caption{Comparison of different regularization methods for adversarial robustness of the UltraGCN model across various metrics. Metrics are averaged over 5 trials.}
    \label{fig:recsys_additional_metrics}
\end{figure}

\begin{definition}[Normalized Discounted Cumulative Gain (NDCG)]
Consider the collaborative filtering setting described in \Cref{exp:recsys}. For user $u$, let $i_1, \dots, i_k$ denote the items with the highest predicted scores in $\hat{R}_u$, and let
$$\mathcal{I} = \{i \mid R_{ui} = 1\}$$ 
denote the set of items that interacted with user $u$.

\noindent\textbf{Discounted Cumulative Gain.}
The \emph{Discounted Cumulative Gain} at rank cutoff $k$ is
\begin{equation*}
\mathrm{DCG}@k
=
\sum_{j=1}^{k} \frac{\mathbb{I} (i_j \in \mathcal{I})}{\log_2(j+1)} .
\end{equation*}

\noindent\textbf{Ideal DCG.}
The \emph{Ideal DCG} at rank cutoff $k$ is the maximum DCG attainable for this user:
\begin{equation*}
\mathrm{IDCG}@k
=
\sum_{j=1}^{\min(k, |\mathcal{I}|)} \frac{1}{\log_2(j+1)} .
\end{equation*}

\noindent\textbf{Normalized DCG.}
The \emph{Normalized Discounted Cumulative Gain} at cutoff $k$ is
\begin{equation*}
\mathrm{NDCG}@k
=
\frac{\mathrm{DCG}@k}{\mathrm{IDCG}@k}.
\end{equation*}
\end{definition}

\begin{definition}[Mean Reciprocal Rank (MRR)]
For user $u$, let $i_1, \dots, i_k$ denote the items with the highest predicted scores in $\hat{R}_u$, and let
$$\mathcal{I} = \{i \mid R_{ui} = 1\}$$ 
denote the set of items that interacted with user $u$.

\noindent\textbf{Reciprocal Rank.}  
Let
\begin{equation*}
j_u^\ast \;=\; \min\{\,j \in \{1,\dots,k\} \mid i_j \in \mathcal{I}_u\},
\end{equation*}
i.e. the rank position of the first relevant item within the top-$k$ list (set $j_u^\ast = \infty$ if no relevant item appears).  
The \emph{Reciprocal Rank} for user $u$ at cutoff $k$ is then
\begin{equation*}
\mathrm{RR}_u@k
=
\begin{cases}
\dfrac{1}{j_u^\ast}, & j_u^\ast < \infty\\
0, & j_u^\ast = \infty.
\end{cases}
\end{equation*}

\noindent\textbf{Mean Reciprocal Rank.}  
Given the evaluation user set $\mathcal{U}$, the \emph{Mean Reciprocal Rank} is the average of the individual reciprocal ranks:
\begin{equation*}
\mathrm{MRR}@k
=
\frac{1}{|\mathcal{U}|}
\sum_{u \in \mathcal{U}}
\mathrm{RR}_u@k.
\end{equation*}
\end{definition}

\begin{definition}[Coverage]
Let $d_i$ denote the number of all items,
$\mathcal{U}$ be the evaluation user set and
$i^{(u)}_1,\dots,i^{(u)}_k$ the $k$ highest-scoring items
recommended to user $u \in \mathcal{U}$.
Define the set of all recommended items
\begin{equation*}
\mathcal{S}_k
=
\bigcup_{u\in\mathcal{U}}
\bigl\{i^{(u)}_1,\dots,i^{(u)}_k\bigr\}.
\end{equation*}

\noindent\textbf{Coverage.}
The \emph{coverage} at cutoff~$k$ is the fraction of the catalogue
that the recommender exposes across all top-$k$ lists:
\begin{equation*}
\mathrm{Coverage}@k
=
\frac{\lvert\mathcal{S}_k\rvert}{d_i}.
\end{equation*}
\end{definition}

\end{document}